\documentclass{article}


\usepackage{silence}
\WarningFilter{latex}{You have requested package}
\usepackage[preprint]{include/neurips_2023}




\usepackage[utf8]{inputenc} 
\usepackage[T1]{fontenc}    
\usepackage{hyperref}       
\usepackage{url}            
\usepackage{booktabs}       
\usepackage{amsfonts}       
\usepackage{nicefrac}       
\usepackage{microtype}      
\usepackage{xcolor}         

\usepackage{amsmath}
\usepackage{graphicx}
\usepackage{enumitem}
\usepackage{algorithm}
\usepackage{algorithmic}
\usepackage{multirow}

\usepackage{amsthm}
\theoremstyle{plain}
\newtheorem{theorem}{Theorem}[section]
\newtheorem{definition}[theorem]{Definition}
\newtheorem{example}[theorem]{Example}

\DeclareMathOperator*{\argmax}{arg\,max}
\DeclareMathOperator*{\argmin}{arg\,min}

\title{Robust Universal Adversarial Perturbations}

%

\author{%
  Changming Xu\\
  Department of Computer Science\\
  University of Illinois Urbana-Champaign\\
  Champaign, IL 61801\\
  \texttt{cxu23@illinois.edu}\\
  \And
  Gagandeep Singh\\
  Department of Computer Science\\
  University of Illinois Urbana-Champaign\\
  Champaign, IL 61801\\
  \texttt{ggnds@illinois.edu}\\
}

\begin{document}

\maketitle

\begin{abstract}
  Universal Adversarial Perturbations (UAPs) are imperceptible, image-agnostic vectors that cause deep neural networks (DNNs) to misclassify inputs with high probability. In practical attack scenarios, adversarial perturbations may undergo transformations such as changes in pixel intensity, scaling, etc. before being added to DNN inputs. Existing methods do not create UAPs robust to these real-world transformations, thereby limiting their applicability in practical attack scenarios. In this work, we introduce and formulate UAPs robust against real-world transformations. We build an iterative algorithm using probabilistic robustness bounds and construct such UAPs robust to transformations generated by composing arbitrary sub-differentiable transformation functions. We perform an extensive evaluation on the popular CIFAR-10 and ILSVRC 2012 datasets measuring our UAPs' robustness under a wide range common, real-world transformations such as rotation, contrast changes, etc. We further show that by using a set of primitive transformations our method can generalize well to unseen transformations such as fog, JPEG compression, etc. Our results show that our method can generate UAPs up to $23\%$ more robust than state-of-the-art baselines. 
\end{abstract}

 \section{Introduction}

Deep neural networks (DNNs) have achieved impressive results in many application domains such as natural language processing~\citep{cnnspeech, gpt3}, medicine~\citep{skincancer, dlmedguide}, and computer vision~\citep{vgg, inception}. Despite their performance, they can be fragile in the face of adversarial 
perturbations: small imperceptible changes added to a correctly classified input that make a DNN misclassify. 
%
While there is a large amount of work on generating adversarial perturbations~\citep{intriguing, fgsm, deepfool, pgd, cw, ijcai:18, Dong:18, croce:19, wang:19, distributionally:19, andriushchenko:19, tramer:20}, these works depend upon unrealistic assumptions about the power of the attacker:  
the attacker knows the DNN input in advance, generates input-specific perturbations in real-time and \textit{exactly} combines the perturbation with the input before being processed by the DNN. Thus, we argue that these threat models are not realizable in many real-world applications.

\textbf{Practically feasible adversarial perturbations.} In this work, we consider a more practical adversary to reveal real-world vulnerabilities of state-of-the-art DNNs. We assume that the attacker (i) does not know the DNN inputs in advance, (ii) can only transmit additive adversarial perturbations, and (iii) their transmitted perturbations are susceptible to modification due to real-world effects. 
Examples of attacks in our threat model include adding stickers to the cameras for fooling image classifiers~\citep{sticker:19} or transmitting perturbations over the air for deceiving audio classifiers~\citep{music:19}. Note that our threat model is distinct from directly generating adversarial examples (i.e. creating physical adversarial objects ~\citep{turtle}) which require access to the original input. 

\begin{figure*}[ht]
\begin{center}
\centerline{\includegraphics[width=0.8\linewidth]{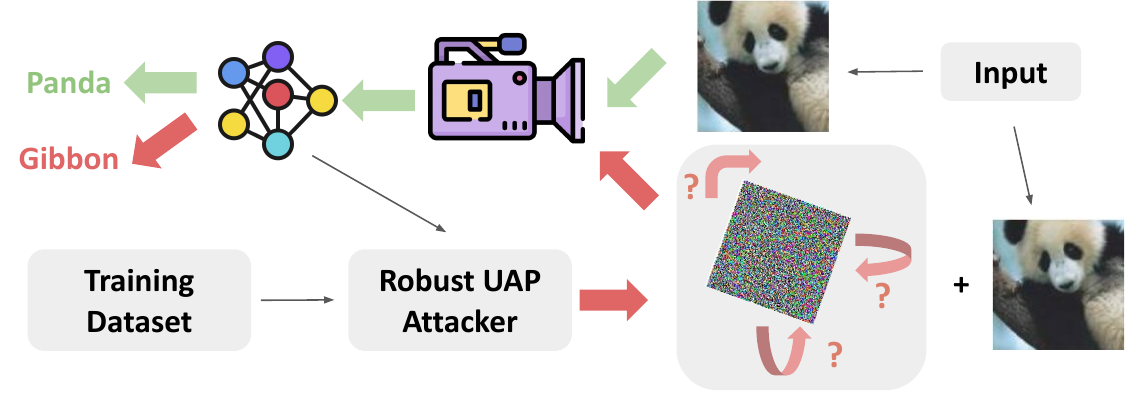}}
\caption{Robust UAP Threat Model: Input Agnostic + Robust to Transmission Transformation}
\label{fig:ruap_threat}
\end{center}
\vskip -0.3in
\end{figure*}

The first two requirements in our threat model can be fulfilled by generating
Universal Adversarial Perturbations (UAPs)~\citep{uap}. Here the attacker can train a single adversarial perturbation that has a high probability of being adversarial on all inputs in the training distribution. However, as our experimental results show, the generated UAPs need to be combined with the DNN inputs precisely, otherwise they fail to remain adversarial. In practice, changes to UAPs are likely due to real-world effects. 
For example, the stickers applied to a camera can undergo changes in contrast due to weather conditions or the transmitted perturbation in audio can change due to noise in the transmission channel. 
This non-robustness reduces the efficiency of practical attacks created with existing methods~\citep{uap,uat,sticker:19,music:19}.

\textbf{This work: Robust UAPs.}
To overcome the above limitation, we propose the concept of robust UAPs: perturbations that have a high probability of remaining adversarial on inputs in the training distribution even after applying a set of real-world transformations. 
The optimization problem in generating robust UAPs~\citep{uap} is made challenging as we are looking for perturbations that are adversarial for a set of inputs as well as to a set of potentially unknown transformations applied to the perturbations. To address this challenge, we make the following main \textbf{contributions}:  


\begin{itemize}[leftmargin=*]
\item We introduce \textit{Robust UAPs} and formulate their generation as an optimization problem. We separate our threat model into two scenarios depending on whether the transformation set is known apriori.
\item We design a new method, \texttt{RobustUAP}, for constructing robust UAPs. Our method is general and constructs UAPs robust to any transformations generated by composing arbitrary sub-differentiable transformation functions. We provide an algorithm for computing provable probabilistic bounds on the robustness of our UAPs against many practical transformations. We show that in the vision domain we can use a set of primitive transforms (adapted from \citet{prime}) to create \textit{Universally} Robust UAPs. 
\item We perform an extensive evaluation of our method on state-of-the-art models for the popular CIFAR-10~\citep{cifar} and ILSVRC 2012~\citep{imagenet} datasets. We compare the robustness of our  UAPs under compositions of challenging real-world transformations, such as rotation, contrast change, etc. We show that on both datasets, the UAPs generated by \texttt{RobustUAP} are significantly more robust, achieving up to $23\%$ more robustness, than the UAPs generated from the baselines. 

\end{itemize}

Our work is complementary to the development of real-world attacks~\citep{music:19,sticker:19} in various domains, which require modeling how the universal perturbations change during transmission. \texttt{RobustUAP} can improve the efficiency of such attacks by constructing perturbations that are more robust to real-world transformations than with existing algorithms~\citep{uap,uat,music:19,sticker:19}. Our results using primitive transformations in vision suggest that we can forego domain specific modeling in other domains if we can find a good set of primitives for that domain.
\section{Background}\label{sec:background}

In this section, we provide necessary background definitions and notation used in the rest of our work. For the remainder of the paper, let $\mu \subset \mathbb{R}^d$ be the input data distribution, $\mathbf{x} \in \mu$ be an input point with the corresponding true label $y \in \mathbb{R}$, and $f:\mathbb{R}^d \to \mathbb{R}^{d'}$ be our target classifier. For ease of notation, we define $f_k(\mathbf{x})$ to be the $k^{\text{th}}$ element of $f(\mathbf{x})$ and allow $\hat{f}(\mathbf{x}) = \argmax_k f_k(\mathbf{x})$ to directly refer to the classification label. We use $\mathbf{v}$ to reference image specific perturbations and $\mathbf{u}$ to reference universal adversarial perturbations, $\mathbf{v_r}$ and $\mathbf{u_r}$ refer to the robust variants and will be defined in Sec. \ref{sec:ruap}. We provide formal definitions in Appendix \ref{appendix:definitions}.

\textbf{Adversarial Examples and Perturbations. }
An \textit{adversarial example} is a misclassified data point that is \textit{close} (in some norm) to a correctly classified data point~\citep{fgsm, pgd, cw}. 
In this paper, we consider examples $\mathbf{x'}$ generated as $\mathbf{x'}=\mathbf{x}+\mathbf{v}$ where $\mathbf{v}$ is an \textit{adversarial perturbation}. 
%
%
%





\textbf{Universal Adversarial Perturbations. }UAPs are single vector, input-agnostic perturbations~\citep{uap}. They differ from traditional adversarial attacks, which create perturbations dependent on each input sample. To measure UAP performance, we introduce the notion of universal adversarial success rate ($\text{ASR}_U$), which measures the probability that a perturbation $\mathbf{u}$ when added to $\mathbf{x}$, sampled from $\mu$, causes a change in classification under $f$. Thus a perturbation, $u$, is a UAP given two conditions: its $\text{ASR}_U$ is greater than a given threshold, $\gamma$, and its norm is small. If an additive perturbation has a small $l_p$-normit does not affect the semantic content of the image as it appears as noise.
We pose the construction of UAPs as an expectation minimization problem:

\vspace{-0.05in}
\begin{equation}\label{eq:uap_opt}
    \argmin_{u}\mathbb{E}_{\mathbf{x}\sim\mu}[\delta (\hat{f}(\mathbf{x}+\mathbf{u}), \hat{f}(\mathbf{x}))] \text{ s.t. } ||\mathbf{u}||_p < \epsilon
\end{equation}
\vspace{-0.05in}

where $\delta$ is the Kronecker Delta function~\citep{calc}.



\begin{figure*}[ht]
\begin{center}
\centerline{\includegraphics[width=0.8\linewidth]{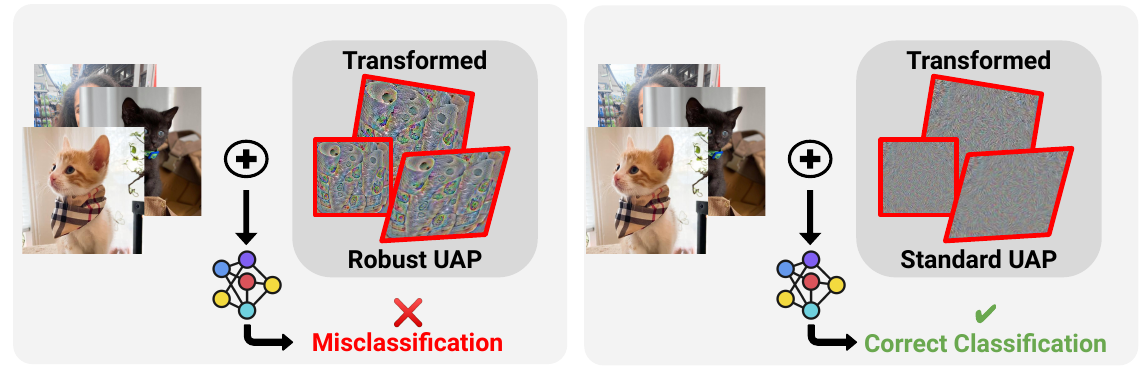}}
\caption{Robust UAPs (left) cause a classier to misclassify on \textit{most} of the data distribution even after transformations are applied on them. Standard UAPs (right) are not robust to transformations and have a low probability of remaining UAPs after transformation.}
\label{fig:semanticrobustuap}
\end{center}
\vskip -0.3in
\end{figure*}

\section{Robust Universal Adversarial Perturbations}\label{sec:ruap}


In this section, we first define our notion of transformation sets and neighborhoods in order to define robust UAPs. Here, when we are referencing transformation sets as the ones applied during transmission, if these are unknown, we detail our method for overcoming this in Section \ref{sec:unknown}. Formal definitions of all terms can be found in Appendix \ref{appendix:definitions}. 

		



\textbf{Transformation Sets and Neighborhoods. } We define a transformation set, $T$, as all transforms, $\tau$, which can be made by composing from a predefined set of bijective sub-differentiable transformation functions. The neighborhood, $N_T(v)$, of a point $v$ is all points, $v'$ reachable from $v$ using transformations from $T$.

\begin{example}
\label{ex:transform}
Let $T$ be all transformations represented by a rotation of $\pm 30^{\circ}$ and scaling of up to a factor of $2$, in this case one $\tau \in T$ could be \{rotation of $8^{\circ}$ and scaling a factor of 1.2\} in that order and $N_T(\mathbf{v})$ would include any point obtained by applying a transformation from $T$ on $\mathbf{v}$.
\end{example}


\textbf{Robust UAPs. } In order to define robust UAPs we introduce robust universal adversarial success rate. The \textit{robust universal adversarial success rate}, $\text{ASR}_R$, measures the probability that a neighbor of $\mathbf{u_r}$ is also an UAP on $\mu$, i.e. after transformation it maintains high universal ASR above some threshold $\gamma$. We note that even though $||\mathbf{u_r}||_p \leq \epsilon$, it can happen that a $\mathbf{u'_r} \in N_T(\mathbf{u_r})$ has $||\mathbf{u'_r}||_p > \epsilon$. Therefore, we require that the norm of $\mathbf{u'_r}$ is small.


\begin{definition}
\label{def:ruap}
A \textit{robust UAP}, $\mathbf{u_r}$, is one which \textit{most} points within a neighborhood of $\mathbf{u_r}$ when added to \textit{most} points in $\mu$ fool the classifier, $f$. $\mathbf{u_r}$ satisfies $||\mathbf{u_r}||_p < \epsilon$ and $\text{ASR}_R(f, \mu, T, \gamma, \mathbf{u_r}) > \zeta$.
\end{definition}

In order to construct robust UAPs, we can pose the following expectation minimization problem:

\vspace{-0.15in}
\begin{equation}\label{eq:ruap_opt}
    \argmin_{\mathbf{u_r}}\mathop{\mathbb{E}}_{\mathbf{u'_r}\in N_T(\mathbf{u_r})}[I(||\mathbf{u_r'}||<\epsilon) \times \mathop{\mathbb{E}}_{\mathbf{x}\sim\mu}[\delta(\hat{f}(\mathbf{x} + \mathbf{u_r'}), \hat{f}(\mathbf{x}))]] \text{ s.t. } ||\mathbf{u_r}||_p < \epsilon
\end{equation}
\vspace{-0.1in}

Here $I:\mathbb{R}^d \to \mathbb{R}$ denotes an indicator function. The inner expectation represents the UAP condition for the transformed perturbation $\mathbf{u'_r}$  while the outer expectation represents the neighborhood robustness condition. The composition of these conditions in Equation \ref{eq:ruap_opt} makes it computationally harder than minimizing over only the transformation set, as in EOT~\citep{turtle}, or than minimizing over only the data distribution, as in Equation \ref{eq:uap_opt}. 



\section{Generating Robust Universal Adversarial Perturbations}

In this section we discuss how we deal with both known and unknown transformation sets, then describe our approach for optimizing Equation \ref{eq:ruap_opt}. As it would be computationally prohibitive to precisely compute the expected value, we estimate the expected value per batch, $\mathbf{\hat{x}} \subset \mu$, and random set of transformations sampled from $T$, $\hat{\tau} \subset T$. We can approximate Equation \ref{eq:ruap_opt} using:

\vspace{-0.15in}
\begin{equation}
     \frac{I(||\hat{\tau}_j(\mathbf{u_r})||<\epsilon)}{|\mathbf{\hat{x}}|\times |\hat{\tau}|} \sum_{i = 1}^{|\mathbf{\hat{x}}|}\sum_{j = 1}^{|\hat{\tau}|} L[f(\mathbf{\hat{x}_i} + \hat{\tau}_j(\mathbf{u_r})), f(\mathbf{\hat{x}_i)}] - \lambda ||\mathbf{u_r}||_p
\end{equation}
\vspace{-0.1in}

We describe our two threat models and some intuitive baselines for optimizing Equation \ref{eq:ruap_opt}. We then present our new algorithm, \texttt{RobustUAP}.

\vspace{-0.15in}
\subsection{Threat Models: Known vs Unknown Transformation Sets}\label{sec:unknown}
\vspace{-0.1in}

In the above section, we have assumed that the transformations applied during transmission is known to the attacker and used to train the UAP. However, in a real-world attack scenario the attacker may not know precisely what transformations its perturbation will undergo. In such scenarios, they may want their attack to be robust to unseen perturbations. In this case, we propose generating robust UAPs using a set of primitive transformations. For the image domain, we draw from existing work in the data augmentation area. 
In their paper, \textbf{PRI}mitives of \textbf{M}aximum \textbf{E}ntropy (PRIME), \citet{prime} define three primitive transformations: spectral, spatial, and color. Using random combinations of these transformations to train, they find that they are able to generalize well to unseen transformations such as frost, JPEG compression, motion blur, etc. We can use these primitive transformations to generate robust UAPs and we show that in practice this generates robust UAPs which generalize well to a variety of unseen transforms. In other domains, we hope that this work helps to inspire finding similar primitive transformation sets.

\vspace{-0.15in}
\subsection{Baseline Algorithms}
\vspace{-0.1in}

We propose two baseline algorithms for generating robust UAPs. The first method is momemtum based Stochastic Gradient Descent (SGD). We can directly solve Equation \ref{eq:ruap_opt} using gradient descent. The second baseline is leveraging the standard UAP algorithm from~\citet{uap}, but instead of computing an adversarial perturbation at each point, we compute a robust adversarial perturbation at each point. More details about both of these baseline algorithms can be found in Appendix \ref{appendix:baseline}. Both of these algorithms can be seen as naively combining the EoT and UAP algorithms, in the next section we describe \texttt{RobustUAP} our algorithm which takes a more principled approach at robust UAP generation.

\vspace{-0.15in}
\subsection{Robust UAP Algorithm}
\vspace{-0.1in}

The baseline algorithms have two fundamental limitations: (i) they rely on random sampling over the symbolic transformation region, but the sampling strategy does not explicitly try to maximize the robustness of the generated UAP over the entire symbolic region, and (ii) they do not estimate robustness on unsampled transformations. These baselines can be seen as naive combinations of the EoT and UAP algorithms. As a result, the baselines yield suboptimal UAPs (as confirmed by our experiments below). To overcome these fundamental limitations, we create a method to compute probabilistic bounds for expected robustness on an entire symbolic region. We leverage this method for approximating expected robustness in a new algorithm to generate robust UAPs with guarantees. 
We make a simplifying assumption that $N_T(\mathbf{u_r})$ has a well defined, sampleable probability density function (PDF) as we cannot bound robustness for arbitrary transformations. Our experiments show that even though our assumptions do not hold for all the transformation sets considered in this work, they significantly improve the robustness of our generated UAPs. Our approximation of the expected robustness relies on the following theoretical result:

\begin{theorem}
\label{thrm:robust}
Given a perturbation $\mathbf{u_r}$, a neural network $f$, a finite set of inputs $\mathbf{X}$, a set of transformations $T$, and minimum universal adversarial success rate $\gamma \in \mathbb{R}$. Let $p(\gamma) = P_{\mathbf{u_r'}\sim N_T(\mathbf{u_r})}(ASR_U(f, \mathbf{X}, \mathbf{\mathbf{u_r'}}) > \gamma)$. For $i \in 1\dots n$, let $\mathbf{u_r^i} \sim N_T(\mathbf{u_r})$ be random variables with a well defined PDF and $I:\mathbb{R}^d \to \mathbb{R}$ be the indicator function, let
\begin{equation}
\hat{p}_n(\gamma) = \frac{1}{n} \sum_{i=1}^n I(ASR_U(f, \mathbf{X}, \mathbf{u_r^i}) > \gamma)
\end{equation}
\vspace{-0.15in}

For accuracy level, $\psi \in (0,1)$, and confidence, $\phi \in (0, 1)$, where $(0,1)$ is the open interval between $0$ and $1$. If $n \geq \frac{1}{2\psi^2}\ln\frac{2}{\phi}$ then

\vspace{-0.15in}
\begin{equation}\label{eq:theorem_eq}
P(|\hat{p}_n(\gamma) - p(\gamma)| < \psi) \geq 1 - \phi
\end{equation}
\vspace{-0.15in}
\end{theorem}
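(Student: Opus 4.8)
The plan is to recognize Theorem~\ref{thrm:robust} as a direct instance of Hoeffding's inequality (equivalently, the Chernoff bound the authors cite) applied to Bernoulli random variables. First I would observe that for each $i$, the quantity $Z_i := I(\mathrm{ASR}_U(f, X, u_r^i) > \gamma)$ is a function of the random draw $u_r^i \sim N_T(u_r)$, and since the $u_r^i$ are i.i.d.\ (each drawn from the same well-defined PDF on $N_T(u_r)$), the $Z_i$ are i.i.d.\ $\{0,1\}$-valued random variables. The key point to nail down here is that $\mathbb{E}[Z_i] = P_{u_r' \sim N_T(u_r)}(\mathrm{ASR}_U(f, X, u_r') > \gamma) = p(\gamma)$ exactly, so that $\hat p_n(\gamma) = \frac{1}{n}\sum_{i=1}^n Z_i$ is an unbiased estimator of $p(\gamma)$. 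This uses that $X$ is a \emph{fixed finite} set, so $\mathrm{ASR}_U(f,X,u_r')$ is a deterministic function of $u_r'$ and the only randomness is in the transformation draw.

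Next I would apply the two-sided Hoeffding bound: for i.i.d.\ random variables $Z_i \in [0,1]$ with mean $p(\gamma)$,
\begin{equation}
P\!\left(\left|\hat p_n(\gamma) - p(\gamma)\right| \geq \psi\right) \leq 2\exp\!\left(-2 n \psi^2\right).
\end{equation}
Then I would simply enforce that the right-hand side is at most $\phi$: solving $2\exp(-2n\psi^2) \leq \phi$ for $n$ gives $n \geq \frac{1}{2\psi^2}\ln\frac{2}{\phi}$, which is exactly the sample-size hypothesis in the statement. Substituting this back yields $P(|\hat p_n(\gamma) - p(\gamma)| \geq \psi) \leq \phi$, and taking complements gives $P(|\hat p_n(\gamma) - p(\gamma)| < \psi) \geq 1-\phi$, which is Equation~\ref{eq:theorem_eq}. (One should be mildly careful about the strict-versus-weak inequality $<\psi$ versus $\geq \psi$, but since $\{|\hat p_n - p| < \psi\}$ is the complement of $\{|\hat p_n - p| \geq \psi\}$, the bound transfers verbatim.)

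I do not expect a genuine obstacle in this proof, since it is a textbook concentration argument; the only thing that requires care is the \emph{modeling} step rather than the \emph{mathematical} step. Specifically, the hypothesis that $N_T(u_r)$ carries a well-defined PDF is what licenses treating the $u_r^i$ as honest i.i.d.\ samples with a well-defined expectation $p(\gamma)$; without it, $p(\gamma)$ would not even be well-defined as an integral. I would state this assumption explicitly at the top of the proof and note that the independence of the draws across $i=1,\dots,n$ is what the sampling procedure in Algorithm~\ref{alg:asrr} is implicitly assuming. After that, the bounded-difference property ($Z_i \in [0,1]$) is immediate and Hoeffding applies with no further conditions. So the proof is essentially: (i) identify the $Z_i$ as i.i.d.\ Bernoulli$(p(\gamma))$; (ii) invoke Hoeffding; (iii) choose $n$ to make the tail probability at most $\phi$; (iv) rewrite as a confidence statement.
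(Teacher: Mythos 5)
Your proposal is correct and is essentially the same argument as the paper's: the paper simply delegates the concentration step to a cited result (Alippi), whose sample bound $n \geq \frac{1}{2\psi^2}\ln\frac{2}{\phi}$ is derived from exactly the Chernoff/Hoeffding inequality you invoke, under the same well-defined-PDF (measurability) assumption you flag. Your version just makes the i.i.d.\ Bernoulli reduction and the tail-bound calculation explicit rather than citing them.
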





The proof of this theorem can be found in Appendix \ref{appendix:theorem}. Theorem \ref{thrm:robust} states that with enough samples from the neighborhood of a perturbation, $\mathbf{u_r}$, the adversarial success rate of $\mathbf{u_r}$ on the entire neighborhood is arbitrarily close to the adversarial success rate of $\mathbf{u_r}$ on sampled transformations with probability greater than $1-\phi$. 


Leveraging Theorem \ref{thrm:robust}, we create \texttt{EstRobustness} which given accuracy, $\psi$, and confidence, $\phi$, returns the $ASR_R$ on a finite set of inputs with probabilistic robustness guarantees under the assumptions of Theorem \ref{thrm:robust}. The pseudocode for \texttt{EstRobustness} is in Appendix \ref{appendix:eralg}.


\textbf{Our algorithm: \texttt{RobustUAP}.} We leverage Theorem \ref{thrm:robust} and \texttt{EstimateRobustness} to develop \texttt{RobustUAP}, the pseudocode for which is seen in Algorithm \ref{alg:rob_uap}. Similar to the SGD baseline, we approximate the expectation in Equation \ref{eq:ruap_opt} in batches. We first sample transformations from the PDF of the neighborhood. We set the number of transformations, $n$, based on Theorem \ref{thrm:robust} to satisfy the desired confidence level and accuracy. For each gradient step, we compute the mean loss over the current batch and set of sampled transforms (line 8). 
For each set of batch and sampled transformations, instead of making a single gradient update like SGD, we use Projected Gradient Descent (PGD) to iteratively compute a more robust update to the universal perturbation and end only when the estimated robustness on the batch satisfies a given threshold (line 10). At the end of each epoch, we check the robustness across the entire training set and transformation space using \texttt{EstRobustness} (\texttt{ER}) and stop when we have reached the desired performance (line 14).

\vspace{-0.1in}
\begin{algorithm}[ht]
   \caption{Robust UAP Algorithm}
   \label{alg:rob_uap}
\begin{algorithmic}[1]
   \STATE Initialize $\mathbf{u_r} \gets 0, n \gets \lceil \frac{1}{2\psi^2}\ln\frac{2}{\phi}\rceil$
   \REPEAT
   \FOR{$\mathbf{B} \subset \mathbf{X}$}
   \STATE For $i = 1\dots n$ sample $\tau_i \sim T$
   \IF{\texttt{ER}$(f, \mathbf{B}, T, \gamma, \mathbf{u_r}, \psi, \phi) < \zeta$}
   \STATE $\Delta \mathbf{u_r} \gets 0$
   \REPEAT
   \STATE Compute $L_{\mathbf{B},\tau} = \frac{1}{|\mathbf{B}|\times n} \sum_{i = 1}^{|\mathbf{B}|}\sum_{j = 1}^{n}$ $L[f(\mathbf{B_i} + \tau_j(\mathbf{u_r} + \Delta \mathbf{u_r})), f(\mathbf{B_i})]$
   \STATE $\Delta \mathbf{u_r} = \mathcal{P}_{p, \epsilon}( \Delta \mathbf{u_r} + \alpha \text{sign}(\nabla L_{\mathbf{B},\tau}))$
   \UNTIL{\texttt{ER}$(f, \mathbf{B}, T, \gamma, \mathbf{u_r} + \Delta \mathbf{u_r}, \psi, \phi) < \zeta$}
   \STATE $\mathbf{u_r} \gets \mathcal{P}_{p, \epsilon}(\mathbf{u_r} + \Delta \mathbf{u_r})$
   \ENDIF
   \ENDFOR
   \UNTIL{\texttt{ER}$(f, \mathbf{X}, T, \gamma, \mathbf{u_r}, \psi, \phi) < \zeta$}
\end{algorithmic}
\end{algorithm}
\vspace{-0.1in}


\section{Evaluation}\label{sec:evaluation}

Our \texttt{RobustUAP} framework is applicable to all transformation sets in a variety of domains. We empirically evaluate our method \texttt{RobustUAP} and three baseline approaches (\texttt{SGD}, \texttt{StandardUAP\_RP}, \texttt{StandardUAP}~\citep{uap}) on popular models from the vision domain. We show that \texttt{RobustUAP} is more robust on both uniform random noise and compositions of real-world transformations such as rotation, scaling, etc. We did not have the hardware to print high resolution transparent stickers so we could not produce real-world results in the vision domain. We show that training \texttt{RobustUAP} on a set of primitive transforms results in a universally robust UAP which generalizes well to unseen transformations allowing for successful attacks without the need for domain specific modeling. 

\textbf{Experimental evaluation.} We consider two popular image recognition datasets: CIFAR-10\citep{cifar} and ILSVRC 2012\citep{imagenet}. We evaluate on a pretrained VGG16~\citep{vgg} and Inception-v3~\citep{inception} network on CIFAR-10 and ILSVRC 2012 respectively. For both we evaluate on a random subset (1000 images) for the test set. All experiments were performed on a desktop PC with a GeForce RTX(TM) 3090 GPU and a 16-core Intel(R) Core(TM) i9-9900KS CPU @ 4.00GHz. 

We report the results for $l_2$-norm with $\epsilon = 100$ for ILSVRC 2012 and $\epsilon = 10$ for CIFAR-10. These values were chosen based on the values presented by the original UAP paper~\citep{uap}. We use an image normalization function given by our pretrained models and thus scaled our $\epsilon$ values accordingly. We note that the $\epsilon$-values are significantly smaller than the image norms resulting in imperceptible perturbations that do not affect the semantic content of the image. Due to the hardness of the optimization problem, for the same norm value, the effectiveness of a UAP is less than input-specific perturbations; however, crafting input-specific perturbations requires making unrealistic assumptions about the power of the attacker as mentioned in the introduction and therefore we do not consider them part of our threat model which aims to generate practically feasible perturbations. We use $\psi = 0.05$ and $\phi = 0.05$ resulting in $n = 738$ for generating samples for our \texttt{RobustUAP} algorithm as well as reporting robust ASR in our evaluation. The UAPs are trained on 2,000 images, other parameters for evaluation are given in Appendix \ref{appendix:parameters}. Error bars/variances are reported in Appendix \ref{appendix:errorbars}.

\vspace{-0.15in}
\subsection{Robustness to Random Noise}\label{sec:noise}
\vspace{-0.1in}

We first generate UAPs robust against uniform random noise. Here since our neighborhood has a well-defined PDF we get the robustness guarantees from \texttt{EstimateRobustness}, in the following sections when we consider semantic and unknown transformations we won't have the same gaurantees. The results and future discussion can be found in Appendix \ref{appendix:randomnoise}.

\begin{figure*}[t]
\vskip -0.2in
     \includegraphics[width=\textwidth]{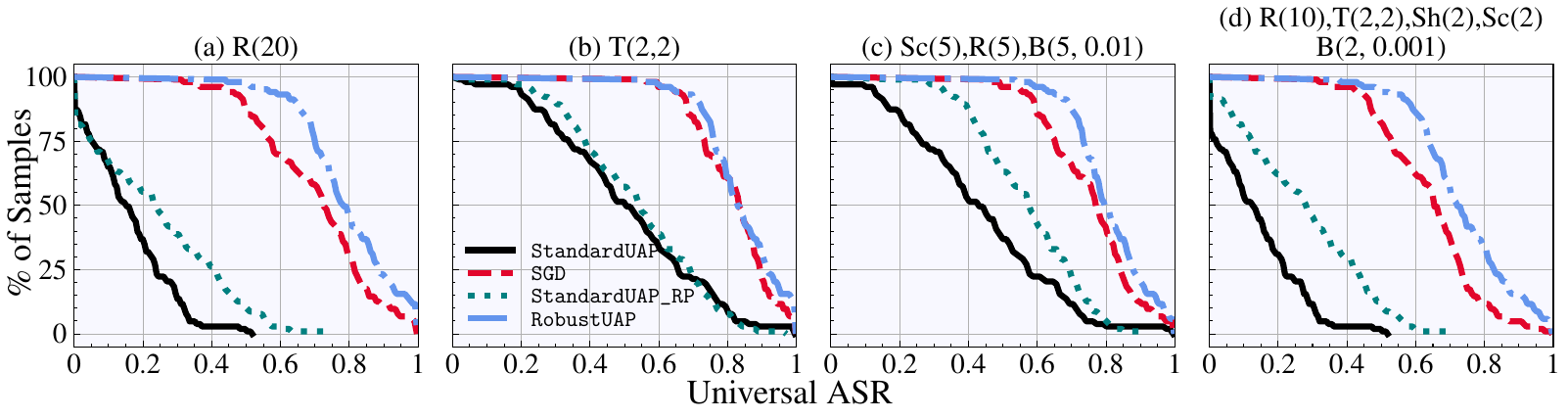}
     \vskip -0.1in
    \caption{For each method, a point $(x,y)$ in the corresponding line represents the percentage of sampled UAPs ($y\%$) with Universal ASR $>x$ for the different semantic transformations on ILSVRC.}
    \label{fig:semantic_top}
\vskip -0.25in
\end{figure*}

\vspace{-0.15in}
\subsection{Robustness to Semantic Transformations}
\vspace{-0.1in}
Next, we consider transformation sets generated by composing five popular semantic transformations in existing literature~\citep{turtle, deepg}: brightness/contrast, rotation, scaling, shearing, and translation.

We use a variety of different compositions to show that our algorithm works under different conditions, and base our parameters for the transformations on~\citep{deepg}. For our experiments, $R(\theta)$ corresponds to rotations with angles between $\pm\theta$; $T(x,y)$, to translations of $\pm x$ horizontally and $\pm y$ vertically; $Sc(p)$ to scaling the image between $\pm p\%$; $Sh(m)$ to shearing by shearing factor between $\pm m\%$; and $B(\alpha, \beta)$ to changes in contrast between $\pm \alpha\%$ and brightness between $\pm \beta$. Further details about these transformations can be seen in Appendix \ref{appendix:transformation}. We consider compositions of different subsets and ranges of these transformations shown in Table~\ref{table:rasr} including composing all transformations together. The hardness of generating robust UAPs depends on the effect that the transformation set has on the UAP (i.e. random noise has a relatively small effect compared to rotation). The hardness also increases with the number of transformations in the composition as well as the range of parameters for each individual transformation. For example, generating robust UAPs is harder for the composition shown in the first and last row for ILSVRC 2012 in Table~\ref{table:rasr} compared to the second and third row. The same is true for generating a UAP robust to uniform random noise.  


\addtolength{\tabcolsep}{-1.5pt}   
\begin{table*}[hbp]
\vskip -0.0in
\caption{Robust ASR of \texttt{RobustUAP} compared to the three baselines.}
\vskip -0.1in
\begin{center}
\begin{footnotesize}
\begin{sc}
\begin{tabular}{llcccc}
\toprule
\multirow{2}{*}{Dataset} & \multirow{2}{*}{Transformation Set} & \texttt{Standard} & \texttt{SGD} & \texttt{Standard} & \texttt{Robust} \\
& & \texttt{UAP} &  & \texttt{UAP\_RP} & \texttt{UAP} \\
\midrule
                             &$R(20)$                                    & $0.0\%$ & $69.9\%$ & $2.9\%$ & $\mathbf{93.2}\%$\\
ILSVRC                       &$T(2,2)$                                   & $35.9\%$ & $96.1\%$ & $38.8\%$ & $\mathbf{97.1}\%$\\
2012                         &$Sc(5), R(5), B(5, 0.01)$                  & $22.3\%$ & $85.4\%$ & $43.7\%$ & $\mathbf{96.1}\%$\\
                             &$R(10), T(2,2), Sh(2), Sc(2), B(2, 0.001)$ & $0.0\%$ & $63.1\%$ & $2.9\%$ & $\mathbf{86.4}\%$\\
\midrule
\multirow{3}{*}{CIFAR-10}    &$R(30), B(2, 0.001)$                       & $0.0\%$ & $64.1\%$ & $2.9\%$ & $\mathbf{75.7}\%$\\
                             &$R(2), Sh(2)$                              & $42.7\%$ & $88.3\%$ & $52.4\%$ & $\mathbf{96.1}\%$\\
                             &$R(10), T(2,2), Sh(2), Sc(2), B(2, 0.001)$ & $0.0\%$ & $58.3\%$ & $7.8\%$ & $\mathbf{79.6}\%$\\
\bottomrule
\end{tabular}
\label{table:rasr}
\end{sc}
\end{footnotesize}
\end{center}
\vskip -0.1in
\end{table*}
\addtolength{\tabcolsep}{1.5pt}

\textbf{Robust ASR} ($\textbf{ASR}_\mathbf{R}$). Figure \ref{fig:semantic_top} shows performance of UAPs obtained by applying 738 randomly sampled transformations to the original UAPs generated by different methods on ILSVRC, similar graphs for CIFAR-10 can be found in Appendix \ref{appendix:diffcdfs}. The \texttt{RobustUAP} algorithm outperforms all others in each case, we observe that for these harder transformation sets \texttt{StandardUAP} loses its effectiveness completely. In Table \ref{table:rasr} we compare robust universal adversarial success rate $\text{ASR}_R$ with $\gamma = 0.6$, in other words, we are finding the percentage of sampled neighbors of the perturbation that are still UAPs with $60\%$ effectiveness on the testing set. We provide average $\text{ASR}_U$ scores as well as $\text{ASR}_R$ for different $\gamma$ levels in Appendix \ref{appendix:diffmetrics}. 

Our \texttt{RobustUAP} algorithm achieves at least $53.4\%$ higher robust ASR when compared to the standard UAP algorithm on both datasets and all transformation sets. Furthermore, our \texttt{RobustUAP} algorithm significantly outperforms both robust baseline approaches. Except for the $T(2,2)$ case which we observe to be the easiest, \texttt{RobustUAP} achieves at least $11.6\%$ performance gain over the baselines. \texttt{SGD} is the best performing baseline and achieves high robust ASR on relatively easier transformation sets performing within $1\%$ of \texttt{RobustUAP} on $T(2,2)$. On harder transformation sets, this gap widens considerably, see Table~\ref{table:rasr}. 

\vspace{-0.15in}
\subsection{Universally Robust UAPs} 
\vspace{-0.1in}

Using the set of primitive transformations discussed by PRIME, we generate robust UAPs on ILSVRC using the same parameters as above. For each sampled transform, we randomly apply three transformations from identity, spectral, spatial, and color. This means that we can get multiple of the same transformation or even no transformation. We follow the setup from PRIME for the parameters of each transformation type. Table \ref{table:urasr} shows the robust ASR when training a \texttt{RobustUAP} on PRIME, Affine ($R(10), T(2,2), Sh(2), Sc(2), B(2, 0.001)$), and Fog transformation sets and how they perform on common corruptions \cite{prime, 3dcorrupt, imagenetc}. Although prime does not inherently contain any specific affine or common corruption in its training it has generally high robustness (>58.3\%) against all transformation sets tested. We observe that training on the target transformation set does bring higher robustness than training on PRIME (i.e. Affine-trained robust UAP has best performance on Gaussian, Contrast, Affine while Fog-trained robust UAP has best performance on fog); however, we find that PRIME has much better performance on unseen transformations (i.e. Fog-trained or Affine-trained robust UAP on JPEG). Our results suggest that a set of good primitive transformations is sufficient for generating universally robust UAPs that generalize well to unseen transformations.

\addtolength{\tabcolsep}{-4.5pt}   
\begin{table*}[t]
\vskip -0.1in
\caption{Robust ASR (\%) of \texttt{RobustUAP} trained on PRIME, Affine ($R(10)$, $T(2,2)$, $Sh(2)$, $Sc(2)$, $B(2, 0.001)$), and Fog when applied to Prime, Affine, and common corruption transforms}
\vskip -0.1in
\begin{center}
\begin{scriptsize}
\begin{sc}
\begin{tabular}{c|cc|ccc|cccc|cccc|cccc}
\toprule
 & \multicolumn{17}{c}{Evaluation Corruption Set}\\
\cmidrule{2-18}
Train & \multicolumn{2}{c}{} & \multicolumn{3}{c}{Noise} & \multicolumn{4}{c}{Blur} & \multicolumn{4}{c}{Weather} & \multicolumn{4}{c}{Digital} \\
Set & Prime & Aff. & Gaus. & Shot & Imp. & Defo. & Glass & Moti. & Zoom & Snow & Fog & Frost & Bright & Contr. & Elast. & Pixel & JPEG \\
\midrule
PRIME   & \textbf{68.4} & 58.3   & 72.1 & 81.3 & \textbf{88.6}            & \textbf{66.5} & \textbf{75.2} & \textbf{81.0} & 74.6  & \textbf{77.8} & 78.8 & \textbf{65.3} & \textbf{85.3}   & 90.4 & \textbf{74.2} & \textbf{69.2} & \textbf{73.3} \\
Affine  & 10.1 & \textbf{86.4}   & \textbf{91.2} & \textbf{93.2} & 85.4   & 45.1 & 31.4 & 76.1 & \textbf{92.4}                    & 65.2 & 70.1 & 50.1 & 80.1                              & \textbf{94.1} & 39.1 & 30.5 & 37.3 \\
Fog     & 1.5  & 0.1             & 10.2 & 11.3 & 9.3                      & 15.1 & 7.6  & 10.1 & 5.5                              & 69.5 & \textbf{95.2} & 21.3 & 10.1                     & 12.6 & 18.4 & 2.8  & 3.9  \\
\bottomrule
\end{tabular}
\label{table:urasr}
\end{sc}
\end{scriptsize}
\end{center}
\vskip -0.1in
\end{table*}
\addtolength{\tabcolsep}{4.5pt}

\begin{figure*}[bh]
\vskip -0.1in
\begin{center}
\centerline{\includegraphics[width=\linewidth]{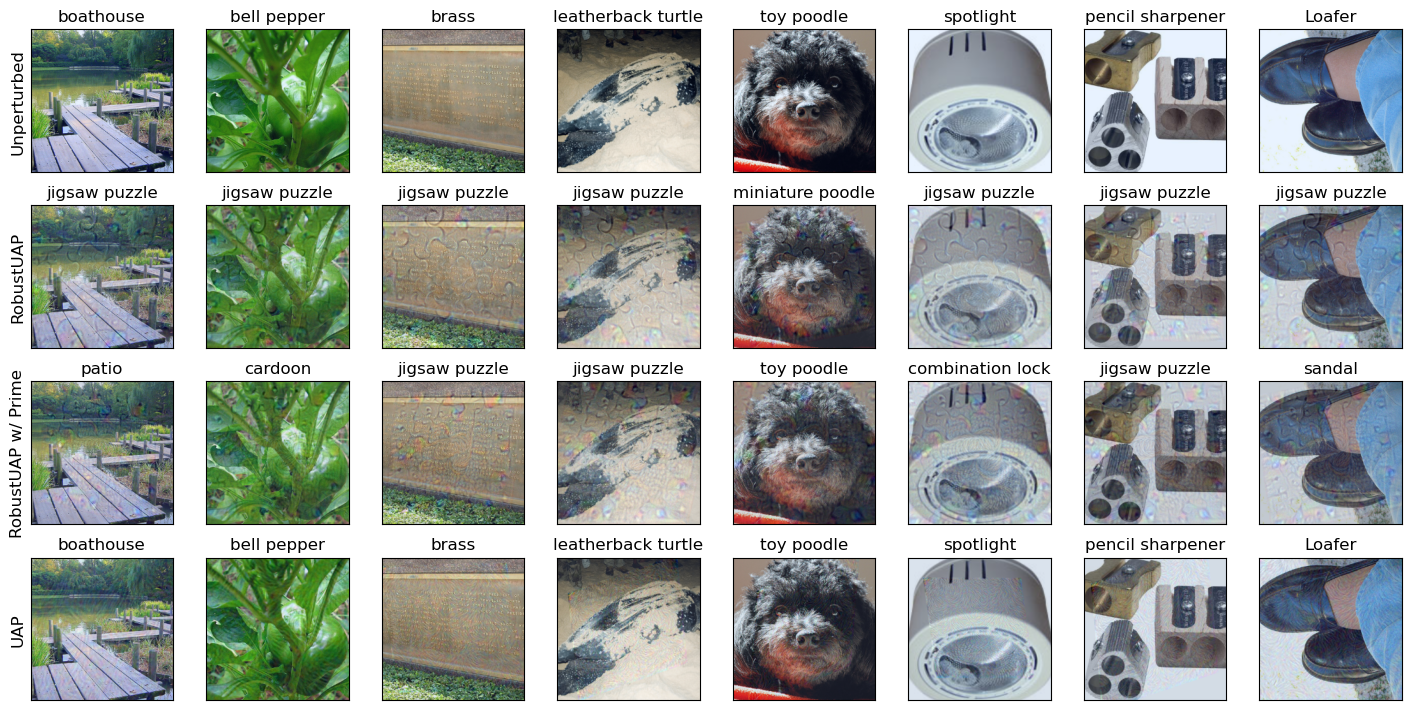}}\vskip -0.1in
\caption{Examples of perturbed images with labels. The top row is unperturbed ILSVRC 2012 test set images, the second row has a randomly transformed robust UAP added to it, the third row has a randomly transformed robust UAP trained with Prime added to it, and the bottom row has a randomly transformed standard UAP added to it. Labels calculated using Inception-v3.}
\label{fig:classification}
\end{center}
\vskip -0.3in
\end{figure*}

\vspace{-0.15in}
\subsection{Visualization} 
\vspace{-0.1in}

We visualize UAPs generated with different algorithms transformed randomly from $R(10)$, $T(2,2)$, $Sh(2)$, $Sc(2), B(2, 0.001)$ and added to images in ILSVRC 2012 in Figure \ref{fig:classification}. Our robust UAPs have a similar level of imperceptibility to standard UAPs. Robust UAPs affect the model classification after transformation with high probability, unlike standard UAPs. 
We further visualize UAPs generated with our three robust algorithms on the same transformation set and dataset in Appendix \ref{appendix:vis}.

\vspace{-0.15in}
\subsection{Transferability of Robust UAPs} 
\vspace{-0.1in}

We evaluate the transferability of \texttt{RobustUAP}. Previous works on UAPs~\citep{uap} show that UAPs are transferable across different models. Here, we will evaluate whether robust UAPs exhibit the same behavior for robustness. The robust UAPs studied here are generated with \texttt{RobustUAP} on $R(10), T(2,2), Sh(2), Sc(2), B(2, 0.001)$ for ILSVRC-2012 with $\gamma = 0.6$. We use a variety of models: Inception-v3~\citep{inception}, MobileNet~\citep{mobilenet}, Inception-v3 trained to be robust on $R(20)$ (InceptionR20), Inception-v3 trained to be robust on horizontal flips (InceptionHF), and ViT~\citep{vit}. Table \ref{table:transferability} shows us that our robust UAPs are transferable between different architectures. Our results show that robust UAPs transfer their robustness properties between architectures and models.  Ignoring ViT, on all of the Inception and MobileNet models, the generated UAPs maintain at least 65\% robust ASR when transferred to each other. This transfer is less but still significant for ViT where it maintains at least 32\% robustness when transferred to or from the other models.

\addtolength{\tabcolsep}{-1.5pt}   
\begin{table*}[ht]
\vskip -0.1in
\caption{Robust ASR when UAP is learned on source model and transfered to target model.}
\begin{center}
\begin{footnotesize}
\begin{sc}
\begin{tabular}{l|ccccc}
\toprule
& \multicolumn{5}{c}{Target Model}\\
\midrule
Source Model & Inception & MobileNet & InceptionR20 & InceptionHF & ViT\\
\midrule
Inception    & $\mathbf{86.4\%}$ & $65.2\%$ & $75.2\%$ & $78.5\%$ & $35.1\%$\\
MobileNet    & $74.3\%$ & $\mathbf{86.2\%}$ & $67.3\%$ & $68.6\%$ & $38.3\%$\\
InceptionR20 & $80.1\%$ & $67.3\%$ & $\mathbf{81.3\%}$ & $73.1\%$ & $32.0\%$\\
InceptionHF  & $77.8\%$ & $70.9\%$ & $75.8\%$ & $\mathbf{83.8\%}$ & $34.6\%$\\
ViT          & $41.2\%$ & $32.4\%$ & $43.2\%$ & $39.7\%$ & $\mathbf{88.5\%}$\\

\bottomrule
\end{tabular}
\label{table:transferability}
\end{sc}
\end{footnotesize}
\end{center}
\vskip -0.175in
\end{table*}
\addtolength{\tabcolsep}{1.5pt}

\vspace{-0.15in}
\subsection{Robustness against Robust UAPs}
\vspace{-0.1in}

Traditional methods for robustness, such as adversarial training, focus on being robust in scenarios where the attacker is powerful (i.e. PGD), but with this comes a significant tradeoff in accuracy. In a preliminary study of practical robustness, we train for robustness against practical attacks such as Robust UAP, while maintaining high accuracy and faster training times. We perform our experiments on CIFAR-10, with a VGG16 model architecture, and with our most challenging transformation set ($R(10), T(2,2), Sh(2), Sc(2), B(2, 0.001)$). We use a batch-wise variant of our robust UAP algorithm to do adversarial training. With this training method, our model obtains a Robust ASR of 0.4\% and an accuracy of 90.1\%. In contrast, standard adversarial training obtains a Robust ASR of 0.2\% and an accuracy of 83.5\%. Here, we can see that our training method obtains almost the same practical robustness without a significant reduction in accuracy. Further, our adversarial training method with robust UAP takes 12 minutes while standard adversarial training takes 48 minutes, which is one indication that our proposed training method is more efficient. We believe that further study can find improvement in robustness, accuracy, and efficiency of this type of training method which allows us to be practically robust while sacrificing less accuracy.

\vspace{-0.15in}
\subsection{Additional Experiments}
\vspace{-0.1in}

In Appendix \ref{appendix:asru} we show how our robust UAPs compare to standard UAPs on the non-robust universal ASR metric. In Appendix \ref{appendix:addmodels}, we evaluate our methods on ResNet18~\citep{resnet} and MobileNet~\citep{mobilenet} for CIFAR-10 and ILSVRC 2012 respectively. The results follow the same trends as those reported in Table \ref{table:rasr}. To address additional real-world transformations, we investigate fog perturbations from \cite{3dcorrupt} in Appendix \ref{appendix:commoncorruptions} finding similar results. 
We show that our methods work well in the targeted attack domain as well, results can be see in Appendix \ref{appendix:targeted}. In Appendix \ref{appendix:dataefficiency}, we show that \texttt{RobustUAP} has good data efficiency and obtains good performance with less data. In Appendix \ref{appendix:sgdvsrobust}, we find that \texttt{RobustUAP} beats out \texttt{SGD} even at the same runtime. The recent popularity of transformer based models has also led us to show that our methods work on transformer based networks, results in Appendix \ref{appendix:transformers}. In Appendix \ref{appendix:robustnetwork}, we find that traditional model robustness does not seem to effect ability to create robust UAPs. Finally, in Appendix \ref{appendix:ablationopt}, we perform an ablation study on optimization strategy and show that SGD outperforms other popular optimizers.\vspace{-0.15in}
\section{Related Work}\vspace{-0.05in}


\textbf{UAP Algorithms. }Most works focusing on UAPs~\citep{uap, gduap, mutualuap, singularuap, uapdefense, uap_segmentation, zhang2020cd} generate singular vectors and do not consider perturbation robustness.~\citet{wireless_uap} introduces a perturbation generator model (PGM) for the wireless domain which creates UAPs. They show that both adversarial training and noise subtracting defenses used in the wireless domain are highly effective in mitigating the effects of a single vector UAP attack; they further show that their method of generating a set of UAPs is an effective way for an attacker to circumvent these defenses. Although PGM provides a method for efficiently sampling unique UAPs, it is not robust to real-world transformations. In contrast, our method enables efficient sampling of UAPs that are robust to transformations.    


\textbf{Robust Adversarial Examples. }The following papers introduce notions of robustness under different viewpoints and environmental conditions for constructing realizable adversarial examples. This is a different threat model compared to the additive perturbations discussed in this paper. \citet{human_jpeg} constructs adversarial examples which minimize human detectability, further introducing the idea of robustness for adversarial examples. They show that their attacks are robust against jpeg compression. \citet{glasses} attack facial recognition systems by putting adversarial perturbations on glass frames. Their work demonstrates a successful physical attack under stable conditions and poses. \citet{stopsign} proposes Robust Physical Perturbations ($\text{RP}_2$) in order to show that adding graffiti on a stop sign can cause it to be misclassified in both simulations and in the real world. \citet{turtle} introduce Expectation over Transformation (EOT) and use it to print real-world objects which are adversarial given a range of physical and environmental conditions.

\textbf{Robust Adversarial Perturbations. }\citet{music:19} generates music which affects a voice assistant based system from picking up its wake word. \citet{sticker:19} presents a method for generating a targeted adversarial sticker which changes an image classifier's classification from one pre-specified class to another. Both of these methods rely on specific use cases and are tailored towards generating adversaries coming from strict distributions, e.g. \citep{music:19} generates guitar music while \citep{sticker:19} generates a small grid of dots. These works build on algorithms akin to our baseline approaches and are limited in scope to domain specific transformations. Our work provides a framework for improving robustness against a wide range of transformations in diverse domains and can be leveraged for improving the effectiveness of these attacks.\vspace{-0.15in}

\section{Limitations and Ethics}
\vspace{-0.1in}
We have shown the benefits of \texttt{RobustUAP} and would like to outline a few limitations and ethical concerns. Firstly, we note that our methods do not have a way to address non-differentiable transformations. We hope that future work leverges methods such as REINFORCE which do not have dependence on differentiability \cite{reinforce}. Secondly, \texttt{RobustUAP} would not work against models trained with standard adversarial training since to have a UAP you need to be able to generate standard adversarial examples. However, currently, robustly trained models are seldom used since adversarial attacks are hard to realize (i.e. UAPs) and these models come with a large tradeoff with accuracy. Our preliminary research suggests that defending against practical attacks such as robust UAP does not come with the same tradeoff, allowing for more practical robustness while retaining similar accuracy. We understand that our proposed methods could cause harm to existing deployed ML methods. Our hope is to expose vulnerabilities in existing safety and security critical applications of ML to spawn further research into practical robustness against real-world implementable attacks.
\vspace{-0.1in}
\section{Conclusion}\vspace{-0.05in}

In this paper, 
we demonstrate that standard UAPs fail to be universally adversarial under transformation. We propose a new method, \texttt{RobustUAP}, to generate robust UAPs based upon obtaining probabilistic bounds on UAP robustness across an entire transformation space. We show that \texttt{RobustUAP} works for both known and unknown transformation sets. Our experiments provide empirical evidence that our principled approach generates UAPs that are more robust than those from the existing/baseline methods. Our preliminary work suggests that robustness against practical adversaries such as robust UAPs may require much less tradeoff with accuracy and we hope that inspires research into robustness against practical attacks. 


\bibliography{paper}
\bibliographystyle{plainnat}

\newpage
\appendix
\onecolumn
\section*{Appendix}
\section{Definitions}\label{appendix:definitions}

In this section, we will formally define the terms used in the main body of the paper. We first start with adversarial examples.

\begin{definition}
\label{def:ae}
Given a correctly classified point $\mathbf{x}$, a distance function $d(\cdot, \cdot):\mathbb{R}^d\times\mathbb{R}^d\to\mathbb{R}$, and bound $\epsilon \in \mathbb{R}$,  $\mathbf{x'}$ is an \textit{adversarial example} iff $d(\mathbf{x'}, \mathbf{x}) < \epsilon$ and $\hat{f}(\mathbf{x'})\neq y$.
\end{definition}

We distinguish between adversarial examples and perturbations. An \textit{adversarial perturbation} added to the point it is attacking is an \textit{adversarial example}, $x' = x + v$. We can construct $v$ by solving the following optimization problem:

\begin{equation}
    \argmin_{v} ||v||_p \text{ s.t. } \hat{f}(x + v) \neq \hat{f}(x)
\end{equation}

Here, we are looking for the smallest $v$ such that $f$'s classification changes from the original output (assuming $f$ correctly classified $x$).

Next, we define the adversarial success rate which measures whether or not a given perturbation is adversarial. 

\begin{definition}
\label{def:asr}
Given a datapoint $x$, and perturbation $v$, \textit{adversarial success}, $\text{AS}$, is defined as
\begin{equation}
    \text{AS}(f, x, v) = 1 - \delta(\hat{f}(x + v), \hat{f}(x))
\end{equation}

Here, $\delta(i, j)$ refers to the Kronecker Delta function~\citep{calc}, formally,

\begin{equation}
    \delta(i, j) = \begin{cases}0 &\text{if } i \neq j\\1 &\text{if } i = j\end{cases}
\end{equation}

With $\text{AS}$, we measure whether $v$ changes $f$'s classification of $x$.
\end{definition}

Using the definision of adversarial success we can now define unniversal adversarial success rate.

\begin{definition}
\label{def:asru}
Given a data distribution $\mu$, and perturbation $\mathbf{u}$, \textit{universal adversarial success rate}, $\text{ASR}_U$, for $\mathbf{u}$, is

\begin{equation}
    \text{ASR}_U(f, \mu, \mathbf{u}) = \mathop{P}_{\mathbf{x}\sim \mu}(\hat{f}(\mathbf{x}+\mathbf{u}) \neq \hat{f}(\mathbf{x}))
\end{equation}
\end{definition}

Using Definition \ref{def:asru}, we formally define a UAP.

\begin{definition}
\label{def:uap}
A \textit{universal adversarial perturbation} is a vector $\mathbf{u} \in \mathbb{R}^d$ which, when added to almost all datapoints in $\mu$ causes the classifier $f$ to misclassify. Formally, given $\gamma$, a bound on universal ASR, and $l_p$-norm with corresponding bound $\epsilon$, $\mathbf{u}$ is a UAP iff $\text{ASR}_U(f, \mu, \mathbf{u}) > \gamma$ and $||\mathbf{u}||_p < \epsilon$.

\end{definition}

Now, we move onto definitions pertaining to robust UAPs. We start by formally defining transformation sets and neighborhoods.

\begin{definition}
\label{def:neighborhood}
A \textit{transformation}, $\tau$, is a composition of bijective sub-differentiable transformation functions. A \textit{transformation set}, $T$, is a set of distinct transformations. A point $\mathbf{v'}$ is in the \textit{neighborhood} $N_T(\mathbf{v})$, of $\mathbf{v}$, if there is a transform in $T$ that maps $\mathbf{v}$ to $\mathbf{v'}$. Formally, 
\begin{equation}
    \mathbf{v'} \in N_T(\mathbf{v}) \iff \exists \tau \in T \text{ s.t. } \tau(\mathbf{v}) = \mathbf{v'}
\end{equation}
\vspace{-0.15in}
\end{definition}

Finally, we formally define robust universal adversarial success rate.

\begin{definition}
\label{def:asrr}
Given a data distribution $\mu$, transformation set $T$, universal ASR level $\gamma$, bound $\epsilon$ on $l_p$-norm, and perturbation $\mathbf{u_r}$, \textit{robust universal adversarial success rate}, $\text{ASR}_R$, is defined as,

\begin{equation}
        \text{AS}\text{R}_R(f, \mu, T, \gamma, \mathbf{u_r}) = \mathop{P}_{\mathbf{u_r'}\sim N_T(\mathbf{u_r})}(\text{ASR}_U(f, \mu, \mathbf{u_r'}) > \gamma \land ||\mathbf{u_r'}||_p < \epsilon)
\end{equation}

\end{definition}
\section{Proof of Theorem \ref{thrm:robust}}\label{appendix:theorem}

This proof relies heavily on similar proofs provided by \citet{chernoff1952measure} and by \citet{alippi}. We refer to the reader to these texts for further details. In our proof, we show how to adapt universal ASR to these proofs.

\begin{proof} 
The bound on $n$ is derived via the Chernoff inequality applied to $\hat{p}_n(\gamma)$ and $\mathop{\mathbb{E}}[\hat{p}_n(\gamma)] = p(\gamma)$~\citep{chernoff1952measure, alippi}. Equation~\ref{eq:theorem_eq} holds since computing universal ASR is Lebesgue measurable over the data distribution and since we assume $N_T(\mathbf{u_r})$ has a well defined PDF. 
\end{proof}
\section{Semantic Transformations}
\label{appendix:transformation}
In this section, we discuss the semantic transformations used in the paper. Brightness and contrast can be represented via \textit{bias} ($\beta$) and \textit{gain} ($\alpha > 0$) parameters respectively. Formally, if $\mathbf{x}$ is the original image, then the transformed image, $\mathbf{x'}$, can be represented as

\begin{equation}
    \mathbf{x'\mathbf} = \alpha \mathbf{x} + \beta
\end{equation}

Rotation, scaling, shearing, and translation are all affine transformations acting on the coordinate system, $c$, of the images instead of the pixel values, $\mathbf{x}$. In order to recover the pixel values and differentiate over the transformation, we will need sub-differentiable interpolation, see Appendix \ref{appendix:interpolation}. For finite dimensions, affine transformations can be represented as a linear coordinate map where the original coordinates are multiplied by an invertible augmented matrix and then translated with additional bias vector. Below, we give the general form for an affine transformation given augmented matrix $\mathbf{A}$, bias matrix $\mathbf{b}$, and input coordinates $c$. We can compute the output coordinates, $c'$, as 

\begin{equation}
    \begin{bmatrix}\mathbf{c'} \\ 1\end{bmatrix} = \begin{bmatrix}[ccc|c]&\mathbf{A}&&\mathbf{b}\\0&\dots&0&1  \end{bmatrix}\begin{bmatrix}\mathbf{c}\\1\end{bmatrix}
\end{equation}

Below, we give the augmented matrix $\mathbf{A}$ and additional bias matrix $\mathbf{b}$ for rotation, scaling, shearing, and translation.

Rotation, $R(\theta)$, by $\theta$ degrees:
\begin{equation}
    \mathbf{A} = \begin{pmatrix}\cos \theta & -\sin\theta\\\sin\theta&\cos\theta\end{pmatrix} \text{, } \mathbf{b} = \begin{pmatrix}0\\0\end{pmatrix}
\end{equation}

Scaling, $Sc(p)$, by $p\%$:
\begin{equation}
    \mathbf{A} = \begin{pmatrix} 1 + \frac{p}{100}& 0\\0&1 + \frac{p}{100}\end{pmatrix} \text{, } \mathbf{b} = \begin{pmatrix}0\\0\end{pmatrix}
\end{equation}

Shearing, $Sh(m)$, by shear factor $m\%$:
\begin{equation}
    \mathbf{A} = \begin{pmatrix}1& 1 + \frac{m}{100}\\0&1\end{pmatrix} \text{, } \mathbf{b} = \begin{pmatrix}0\\0\end{pmatrix}
\end{equation}

Translation, $T(x, y)$, by $x$ pixels horizontally and $y$ pixels vertically:
\begin{equation}
    \mathbf{A} = \begin{pmatrix}0&0\\0&0\end{pmatrix} \text{, } \mathbf{b} = \begin{pmatrix}x\\y\end{pmatrix}
\end{equation}

\section{Interpolation} 
\label{appendix:interpolation}
Affine transformations may change a pixel's integer coordinates into non-integer coordinates. Interpolation is typically used to ensure that the resulting image can be represented on a lattice (integer) pixel grid. For this paper, we will be using bilinear interpolation, a common interpolation method which achieves a good trade-off between accuracy and efficiency in practice and is commonly used in literature~\citep{stae, deepg}. Let $x_{i,j}$, $x_{i,j}'$ represent the pixel value at position $i, j$ for the original and transformed image respectively. Let ${c'}_{i,j}^x$, ${c'}_{i,j}^y$ represent the $x$-coordinate and $y$-coordinate of the pixel at $i,j$ after transformation. We define our transformed image by summing over all pixels $n,m \in [1\dots H]\times[1\dots W]$ where $H$ and $W$ represent the height and width of the image.

\begin{equation}
        x_{i,j}' = \sum_n^H\sum_m^W x_{n,m}\max(0, 1-|{c'}_{i,j}^x-m|)\max(0, 1- |{c'}_{i,j}^y-n|)
\end{equation}

This interpolation can be computed for each channel in the image. While interpolation is typically not differentiable, in order to generate adversarial examples using standard techniques we need a differentiable version of interpolation.~\citep{stn} introduces differentiable image sampling.
Their method works for any interpolation method as long as the (sub-)gradients can be defined with respect to $x, {c'}_{i,j}$. For bilinear interpolation this becomes, 
\begin{equation}
        \frac{\partial x'_{i,j}}{\partial x_{n,m}} = \sum_n^H\sum_m^W\max(0, 1- |{c'}_{i,j}^x-m|)\max(0, 1- |{c'}_{i,j}^y-n|)
\end{equation}

\begin{equation}
        \frac{\partial x'_{i,j}}{\partial {c'}_{i,j}^x} = \sum_n^H\sum_m^W x_{n,m}\max(0, 1- |{c'}_{i,j}^y-n|)
        \begin{cases}
            1&\text{if } m\geq |{c'}_{i,j}^x-m|\\
            -1 &\text{if } m< |{c'}_{i,j}^x-m|\\
            0&\text{otherwise}
        \end{cases}
\end{equation}
\section{Baseline Algorithms}\label{appendix:baseline}

\subsection{Stochastic Gradient Descent}

The first baseline directly solves Equation \ref{eq:ruap_opt} using gradient descent. Since we are solving a constrained optimization problem, we cannot use gradient descent directly. Instead, we can solve the Lagrangian-relaxed form of the problem (adding a weighted norm term to the minimization problem), as in~\citep{cw, turtle}, using a momentum based Stochastic Gradient Descent (SGD). ~\citet{uat} suggests that this is an effective method for generating standard UAPs. Our SGD algorithm is in Appendix \ref{appendix:sgdalg}. In order to implement it, we replace the Delta function with a loss function, $L$. We iteratively converge towards the inner expectation by computing it in batches, and towards the outer expectation by sampling a large number of transformations. Given that we would like to estimate on a batch, $\mathbf{\hat{x}} \subset \mu$, and a random set of transformations sampled from $T$, $\hat{\tau} \subset T$, we can approximate using:

\vspace{-0.15in}
\begin{equation}
     \frac{I(||\hat{\tau}_j(\mathbf{u_r})||<\epsilon)}{|\mathbf{\hat{x}}|\times |\hat{\tau}|} \sum_{i = 1}^{|\mathbf{\hat{x}}|}\sum_{j = 1}^{|\hat{\tau}|} L[f(\mathbf{\hat{x}_i} + \hat{\tau}_j(\mathbf{u_r})), f(\mathbf{\hat{x}_i)}] - \lambda ||\mathbf{u_r}||_p
\end{equation}
\vspace{-0.1in}

\subsection{Standard UAP Algorithm with Robust Adversarial Perturbations}
For our second baseline, we leverage the standard UAP algorithm from~\citet{uap} (see Appendix \ref{appendix:iuapalg} for the algorithm). The standard UAP algorithm takes each input, $\mathbf{x_i}$, computes the smallest additive change, $\Delta \mathbf{u}$, to the current perturbation, $\mathbf{u}$, that would make $\mathbf{u}+\Delta \mathbf{u}$ an adversarial perturbation for $\mathbf{x_i}$. Intuitively, over time the algorithm will approach a perturbation that works on most inputs in the training dataset. We modify this approach by computing robust adversarial perturbations rather than standard adversarial perturbations. At each point $\mathbf{x_i}$, we compute the smallest additive change, $\Delta \mathbf{u_r}$, to the current robust adversarial perturbation, $\mathbf{u_r}$, that would make $\mathbf{u_r} + \Delta \mathbf{u_r}$ a robust adversarial perturbation for $\mathbf{x_i}$. We search for robust adversarial perturbations by optimizing the expectation that a point in the neighborhood of $\mathbf{v_r}$ is adversarial while restricting the perturbation to an $l_p$ norm of $\epsilon$.

\section{SGD Algorithm}\label{appendix:sgdalg}

Our SGD UAP algorithm is based on standard momentum based SGD while optimizing over the objective proposed in \ref{eq:ruap_opt}, the algorithm details can be seen in Algorithm \ref{alg:sgd_uap}.

\begin{algorithm}[ht]
   \caption{Stochastic Gradient Descent UAP Algorithm}
   \label{alg:sgd_uap}
\begin{algorithmic}[1]
   \STATE Initialize $\mathbf{u_r} \gets 0, \Delta \mathbf{u_r} \gets 0$
   \REPEAT
   \FOR{$\mathbf{B} \in \mathbf{X}$}
   \STATE Sample $\hat{t} \subset T$
   \STATE $\Delta \mathbf{u_r} \gets \alpha \Delta \mathbf{u_r} - \frac{\nu}{|\mathbf{\hat{x}}|\times |\hat{t}|} \sum_{i = 1}^{|\mathbf{\hat{x}}|}\sum_{j = 1}^{|\hat{t}|} \nabla L[f(\mathbf{\hat{x}_i} + \hat{t}_j(\mathbf{u_r})), f(\mathbf{\hat{x}_i})]$
   \STATE Update the perturbation with projection:
   \STATE $\mathbf{u} \gets \mathcal{P}_{p, \epsilon}(\mathbf{u_r} + \Delta \mathbf{u_r})$
   \ENDFOR
   \UNTIL{$ASR_R(f, \mathbf{X}, T, \gamma, \mathbf{u_r}) < \zeta$}
\end{algorithmic}
\end{algorithm}

\section{Iterative UAP Algorithm} \label{appendix:iuapalg}

\citet{uap} introduces an iterative UAP algorithm, the algorithm can be seen in Algorithm~\ref{alg:uap}.

\begin{algorithm}[ht]
   \caption{Iterative Universal Perturbation Algorithm (\citet{uap})}
   \label{alg:uap}
\begin{algorithmic}[1]
   \STATE Initialize $\mathbf{u} \gets 0$
   \REPEAT
   \FOR{$\mathbf{x_i} \in \mathbf{X}$}
   \IF{$\hat{f}(\mathbf{x_i} + \mathbf{u}) = \hat{f}(\mathbf{x_i})$}
   \STATE Compute minimal adversarial perturbation: 
   \STATE $\Delta \mathbf{u} \gets \text{arg}\min_{\mathbf{r}} ||\mathbf{r}||_2$ s.t. $\hat{f}(\mathbf{x_i} + \mathbf{u} + \mathbf{r}) \neq \hat{f}(\mathbf{x_i})$
   \STATE Update the perturbation with projection:
   \STATE $\mathbf{u} \gets \mathcal{P}_{p, \epsilon}(\mathbf{u} + \Delta \mathbf{u})$
   \ENDIF
   \ENDFOR
   \UNTIL{$ASR_U(f, \mathbf{X}, \mathbf{u}) < \gamma$}
\end{algorithmic}
\end{algorithm}

\section{Estimate Robustness Algorithm} \label{appendix:eralg}

In this section, we give our algorithm for estimating the robustness of a UAP.

\begin{algorithm}[ht]
   \caption{EstRobustness}
   \label{alg:asrr}
\begin{algorithmic}[1]
   \STATE Draw $n = \lceil \frac{1}{2\psi^2}\ln\frac{2}{\phi}\rceil$ samples $\tau_i \sim T$
   \STATE \textbf{Return}
   $ \hat{p}_n(\gamma) = \frac{1}{n} \sum_i^n I(ASR_U(f, \mathbf{X}, \tau_i(\mathbf{u_r})) > \gamma)$
\end{algorithmic}
\end{algorithm}

\section{Experiment Parameters}\label{appendix:parameters}

In our experiments, we have capped all algorithms at 5 epochs or if they have achieved an $\text{ASR}_R$ of $0.95$. The UAPs are trained with the same transformation set that they are evaluated on. For algorithms running PGD internally, we have capped the number of iterations to $40$.
\section{Further Evaluation of Uniform Noise}\label{appendix:noise}

A table of results for uniform random noise can be seen in Table \ref{table:noise}.

\addtolength{\tabcolsep}{-1pt}   
\begin{table*}[ht]
\caption{Robust ASR with uniform random noise, $\gamma = 0.8$.}
\begin{center}
\begin{small}
\begin{sc}
\begin{tabular}{llcccc}
\toprule
\multirow{2}{*}{Dataset} & \multirow{2}{*}{Transformation Set} & \texttt{Standard} & \texttt{SGD} & \texttt{Standard} & \texttt{Robust} \\
& & \texttt{UAP} &  & \texttt{UAP\_RP} & \texttt{UAP} \\
\midrule
ILSVRC                       &$U(0.1)$ & $81.6\%$ & $94.2\%$ & $91.3\%$ & $\mathbf{99.0}\%$\\
2012                         &$U(0.3)$ & $10.7\%$ & $68.9\%$ & $42.7\%$ & $\mathbf{96.1}\%$\\
\midrule
\multirow{2}{*}{CIFAR-10}    &$U(0.1)$ & $66.0\%$ & $98.1\%$ & $96.1\%$ & $\mathbf{100}\%$\\
                             &$U(0.3)$ & $5.8\%$ & $96.1\%$ & $47.6\%$ & $\mathbf{100}\%$\\
\bottomrule
\end{tabular}
\label{table:noise}
\end{sc}
\end{small}
\end{center}
\end{table*}
\addtolength{\tabcolsep}{1pt}

\section{UAP performance against semantic transformations on CIFAR-10}\label{appendix:diffcdfs}

In this section, we show similar results as shown in ILSVRC-2012 but on CIFAR-10. Here, we again see that RobustUAP outperforms all other baselines.

\begin{figure*}[ht]

     \includegraphics[width=\textwidth]{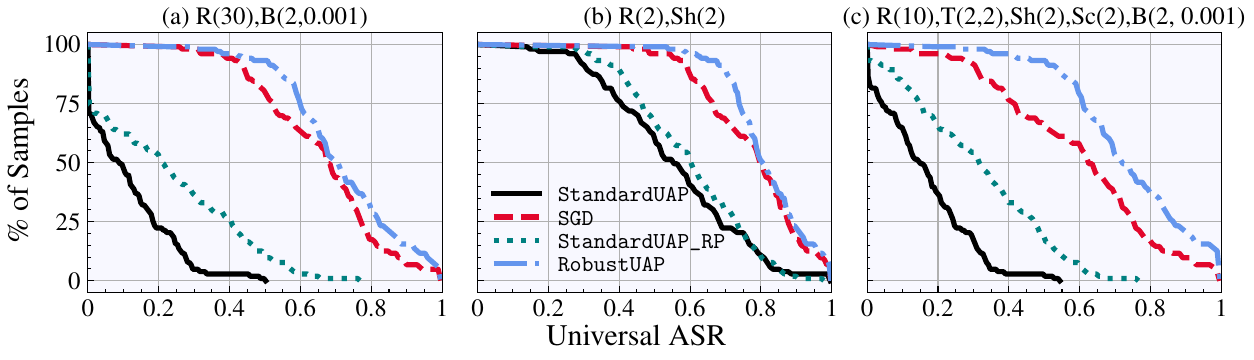}
     
    \caption{For each method, a point $(x,y)$ in the corresponding line represents the percentage of sampled UAPs ($y\%$) with Universal ASR $>x$ for the different semantic transformations on CIFAR-10.}
    \label{fig:semantic_bot}
\end{figure*}

\section{Average \texorpdfstring{$\text{ASR}_U$}{ASRU} and \texorpdfstring{$\text{ASR}_R$}{ASRR} with different \texorpdfstring{$\gamma$}{γ}'s}\label{appendix:diffmetrics}

We provide additional metrics computed on the same set of transformations, datasets, and models as in Table \ref{table:rasr}. In Table \ref{table:a_asru}, we present the Average $\text{ASR}_U$ rather than $\text{ASR}_R$. The average shows us that our \texttt{RobustUAP} algorithm creates UAPs which after transformation on average are better UAPs than all other algorithms. We observe that the average shows us that even standard UAPs aren't completely ineffective after transformation they just have a very low chance of being highly effective.

\addtolength{\tabcolsep}{-1.5pt}   
\begin{table*}[ht]
\caption{Average Universal ASR of our Robust UAP algorithms and the standard UAP~\citep{uap} method.}
\begin{center}
\begin{small}
\begin{sc}
\begin{tabular}{llcccc}
\toprule
\multirow{2}{*}{Dataset} & \multirow{2}{*}{Transformation Set} & \texttt{Standard} & \texttt{SGD} & \texttt{Standard} & \texttt{Robust} \\
& & \texttt{UAP} &  & \texttt{UAP\_RP} & \texttt{UAP} \\
\midrule
                             &$R(20)$                                    & $16.3\%$ & $71.5\%$ & $24.7\%$ & $\mathbf{81.3\%} $\\
ILSVRC                       &$T(2,2)$                                   & $52.6\%$ & $82.6\%$ & $55.4\%$ & $\mathbf{85.4\%} $\\
2012                         &$Sc(5), R(5), B(5, 0.01)$                  & $44.9\%$ & $76.3\%$ & $58.5\%$ & $\mathbf{82.2\%} $\\
                             &$R(10), T(2,2), Sh(2), Sc(2), B(2, 0.001)$ & $13.6\%$ & $64.8\%$ & $29.0\%$ & $\mathbf{75.3\%} $\\
\midrule
\multirow{3}{*}{CIFAR-10}    &$R(30), B(2, 0.001)$                       & $9.9\%$  & $66.8\%$ & $22.2\%$ & $\mathbf{73.4\%} $\\
                             &$R(2), Sh(2)$                              & $57.1\%$ & $78.8\%$ & $61.2\%$ & $\mathbf{82.9\%} $\\
                             &$R(10), T(2,2), Sh(2), Sc(2), B(2, 0.001)$ & $16.2\%$ & $61.2\%$ & $32.6\%$ & $\mathbf{76.4\%} $\\
\bottomrule
\end{tabular}
\label{table:a_asru}
\end{sc}
\end{small}
\end{center}
\end{table*}
\addtolength{\tabcolsep}{1.5pt}

In Table \ref{table:rasr_dg}, we present $\text{ASR}_R$ computed at $\gamma = [0.5, 0.7]$ rather than $\gamma = 0.6$. This table shows a similar story to above, and shows that our algorithm produces better results under a variety of success thresholds.

\addtolength{\tabcolsep}{-4.5pt}   
\begin{table*}[ht]
\caption{Robust ASR of our Robust UAP algorithms and the standard UAP~\citep{uap} method with $\gamma = [0.5, 0.7]$.}
\begin{center}
\begin{small}
\begin{sc}
\begin{tabular}{llcccccccc}
\toprule
\multirow{2}{*}{Dataset} & \multirow{2}{*}{Transformation Set} & \multicolumn{2}{c}{\texttt{Standard}} & \multicolumn{2}{c}{\texttt{SGD}} & \multicolumn{2}{c}{\texttt{Standard}} & \multicolumn{2}{c}{\texttt{Robust}} \\
& & \multicolumn{2}{c}{\texttt{UAP}} & & & \multicolumn{2}{c}{\texttt{UAP\_RP}} & \multicolumn{2}{c}{\texttt{UAP}} \\
& & 0.5 & 0.7 & 0.5 & 0.7 & 0.5 & 0.7 & 0.5 & 0.7\\
\midrule
                             &$R(20)$                                    & $1.9\%$ & $0.0\%$ & $88.3\%$ & $58.3\%$ & $10.7\%$ & $1.0\%$ & $\mathbf{98.1}\%$ & $\mathbf{76.7}\%$\\
ILSVRC                       &$T(2,2)$                                   & $51.5\%$ & $21.4\%$ & $\mathbf{100}\%$ & $84.5\%$ & $57.3\%$ & $23.3\%$ & $\mathbf{100}\%$ & $\mathbf{91.3}\%$\\
2012                         &$Sc(5), R(5), B(5, 0.01)$                  & $38.8\%$ & $11.7\%$ & $96.1\%$ & $67.0\%$ & $64.1\%$ & $25.2\%$ & $\mathbf{99.0}\%$ & $\mathbf{87.4}\%$\\
                             &$R(10), T(2,2), Sh(2), Sc(2), B(2, 0.001)$ & $1.9\%$ & $0.0\%$ & $82.5\%$ & $38.8\%$ & $12.6\%$ & $1.0\%$ & $\mathbf{95.1}\%$ & $\mathbf{59.2}\%$\\
\midrule
\multirow{3}{*}{CIFAR-10}    &$R(30), B(2, 0.001)$                       & $1.0\%$ & $0.0\%$ & $80.6\%$ & $43.7\%$ & $12.6\%$ & $1.0\%$ & $\mathbf{93.2}\%$ & $\mathbf{49.5}\%$\\
                             &$R(2), Sh(2)$                              & $62.1\%$ & $22.3\%$ & $96.1\%$ & $68.9\%$ & $68.0\%$ & $30.1\%$ & $\mathbf{99.0}\%$ & $\mathbf{89.3}\%$\\
                             &$R(10), T(2,2), Sh(2), Sc(2), B(2, 0.001)$ & $2.9\%$ & $0.0\%$ & $67.0\%$ & $38.8\%$ & $19.4\%$ & $1.0\%$ & $\mathbf{93.2}\%$ & $\mathbf{55.3}\%$\\
\bottomrule
\end{tabular}
\label{table:rasr_dg}
\end{sc}
\end{small}
\end{center}
\end{table*}
\addtolength{\tabcolsep}{4.5pt}
\section{Robustness to Random Noise}\label{appendix:randomnoise}

First, we show that our algorithm generates UAPs robust against uniform random noise. Here our neighborhood is defined as an $L_\infty$ ball of radius $\epsilon$ around the perturbation. $U(\epsilon)$ represents noise drawn uniformly from such a ball. Figure \ref{fig:robustnoise_ilsvrc} shows the performance of each algorithm. For example, the \texttt{RobustUAP} algorithm achieves a $\text{ASR}_U$ of 0.9 greater than $97\%$ of the time under $U(0.1)$ on CIFAR-10, where all other algorithms achieve 0.9 at most $30\%$ of the time. \texttt{RobustUAP} outperforms all other algorithms for both noise sizes. \texttt{StandardUAP} has a lower mean and higher variance in universal ASR and is much less robust to transformation. A table of Robust ASR results for $\gamma = 0.8$ can be seen in Appendix \ref{appendix:noise}. Our Robust ASR results are guaranteed to be $\pm 0.05$ from the actual result with a probability of $95\%$. For example, we estimate that \texttt{RobustUAP} has $\text{ASR}_R$ of $96.1\%$ for U(0.3), we are guaranteed that the true robustness is $>91.1\%$ with a probability of $95\%$. Note that we get robustness guarantees from \texttt{EstRobustness} as our neighborhood has a well-defined PDF.

\begin{figure*}[htb]
    \centering
    \includegraphics[width=\textwidth]{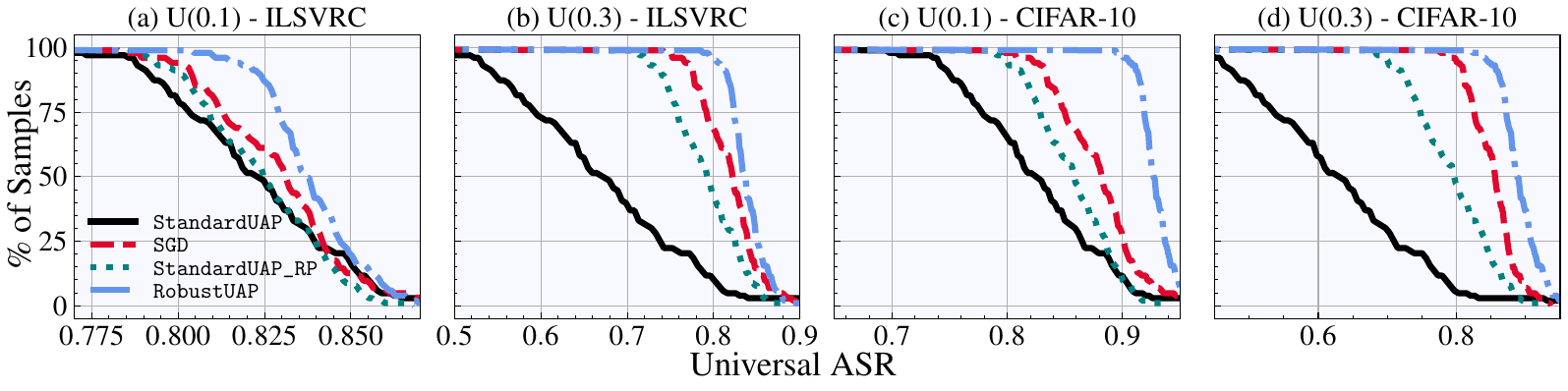}
   \caption{For each method, a point $(x,y)$ in the corresponding line represents the percentage of sampled UAPs ($y\%$) with Universal ASR $>x$ for $U(0.1)$ and $U(0.3)$ on ILSVRC and CIFAR-10.}
   \label{fig:robustnoise_ilsvrc}
\end{figure*}

\section{Comparison on non-Robust Universal ASR metric}\label{appendix:asru}

We compare our robust UAPs to standard UAPs on the non-robust universal ASR metric, see Table \ref{table:uasr}. All robust UAPs are generated to be robust against $R(10), T(2,2), Sh(2), Sc(2), B(2, 0.001)$. We observe that at the same $l_2$-norm all robust UAPs achieve a lower universal ASR than the standard UAP algorithm. This result is not too surprising as solving the optimization problem for robust UAP is significantly more difficult. We further observe that our \texttt{RobustUAP} algorithm is the most effective in comparison to the other robust baseline approaches. 

\begin{table*}[htb]
    \caption{Universal ASR of our Robust UAP algorithms and the standard UAP method.}
    \begin{center}
    \begin{small}
    \begin{sc}
    \begin{tabular}{lcccc}
    \toprule
    Dataset & \texttt{StandardUAP} & \texttt{SGD} & \texttt{StandardUAP\_RP} & \texttt{RobustUAP} \\
    \midrule
    ILSVRC 2012 & $\mathbf{95.5\%}$ & $85.6\%$ & $82.3\%$ & $91.3\% $\\
    \midrule
    CIFAR-10    & $\mathbf{96.2\%}$ & $89.3\%$ & $84.0\%$ & $93.7\% $\\
    \bottomrule
    \end{tabular}
    \label{table:uasr}
    \end{sc}
    \end{small}
    \end{center}
    \end{table*}

\section{Additional Models}\label{appendix:addmodels}

We also provide additional data on our methods evaluated on the same transformations and datasets but on different models. In this case, we use ResNet-18 \citep{resnet} for CIFAR-10 and MobileNet \citep{mobilenet} for ILSVRC 2012. Results can be seen in Table \ref{table:rasr_dm}. We observe similar performance across models suggesting that the performance of the attacks is more directly tied to transformation set and dataset.

\addtolength{\tabcolsep}{-4pt}   
\begin{table*}[htb]
\caption{Robust ASR on Resnet-18 for CIFAR-10 and MobileNet for ILSVRC 2012.}
\begin{center}
\begin{small}
\begin{sc}
\begin{tabular}{lllcccc}
\toprule
\multirow{2}{*}{Dataset} & \multirow{2}{*}{Model} & \multirow{2}{*}{Transformation Set} & \texttt{Standard} & \texttt{SGD} & \texttt{Standard} & \texttt{Robust} \\
& & & \texttt{UAP} &  & \texttt{UAP\_RP} & \texttt{UAP} \\
\midrule
                          & \multirow{4}{*}{MobileNet} &$R(20)$                                    & $8.1\%$ & $71.2\%$ & $2.6\%$ & $\mathbf{85.0\%} $\\
ILSVRC                    &                            &$T(2,2)$                                   & $40.9\%$ & $98.7\%$ & $54.3\%$ & $\mathbf{99.6\%} $\\
2012                      &                            &$Sc(5), R(5), B(5, 0.01)$                  & $16.3\%$ & $94.5\%$ & $44.3\%$ & $\mathbf{96.3\%} $\\
                          &                            &$R(10), T(2,2), Sh(2), Sc(2), B(2, 0.001)$ & $4.1\%$ & $75.7\%$ & $8.6\%$ & $\mathbf{86.2\%} $\\
\midrule
\multirow{3}{*}{CIFAR-10} & \multirow{3}{*}{ResNet-18} &$R(30), B(2, 0.001)$                       & $0.9\%$ & $67.8\%$ & $6.4\%$ & $\mathbf{74.9\%} $\\
                          &                            &$R(2), Sh(2)$                              & $49.9\%$ & $99.5\%$ & $49.1\%$ & $\mathbf{99.8\%} $\\
                          &                            &$R(10), T(2,2), Sh(2), Sc(2), B(2, 0.001)$ & $8.0\%$ & $70.8\%$ & $12.2\%$ & $\mathbf{83.8\%} $\\
\bottomrule
\end{tabular}
\label{table:rasr_dm}
\end{sc}
\end{small}
\end{center}
\end{table*}
\addtolength{\tabcolsep}{4pt}

\section{Common Corruptions}\label{appendix:commoncorruptions}

We also evaluate robust UAP against the 2D fog transformations in \citep{3dcorrupt}. We set the shift intensity of the fog to be 1 and train our robust UAPs to be robust against random fog perturbations. We observe similar results to the transformations we experiment with above. The graph of the results can be seen in Figure \ref{fig:fog}.

\begin{figure*}[ht]

     \centering\includegraphics[width=0.5\textwidth]{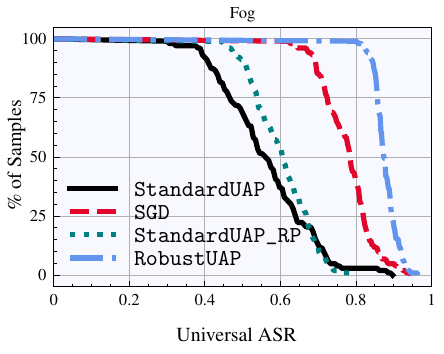}
     
    \caption{For each method, a point $(x,y)$ in the corresponding line represents the percentage of sampled UAPs ($y\%$) with Universal ASR $>x$ for the different semantic transformations on ILSVRC-2012.}
    \label{fig:fog}
\end{figure*}

\section{Algorithm Runtimes}\label{appendix:runtimes}

We compare the average runtimes of the different methods on one of our most challenging $R(10), T(2,2), Sh(2), Sc(2), B(2, 0.001)$ transformation set on ILSVRC-2012 and $n = 738$. The results are in Table \ref{table:runtime}.  We observe that \texttt{RobustUAP} is the slowest algorithm and \texttt{SGD} is the fastest. \texttt{RobustUAP} uses \texttt{EstimateRobustness} in each loop and thus with high $n$ it requires much more time to compute. The extra computation enables \texttt{Robust UAP} to obtain better robustness than all baselines. On the same set of transformations and dataset we observe that one iteration of \texttt{EstimateRobustness} on the entire test set takes on average 19 minutes. When running \texttt{EstimateRobustness} in the \texttt{RobustUAP} loop, each call takes 36 seconds for a batch size of 32. 

\begin{table*}[htb]
\caption{Average Runtime for Robust UAP algorithms}
\begin{center}
\begin{small}
\begin{sc}
\begin{tabular}{lc}
\toprule
Algorithm & time(min) \\
\midrule
\texttt{Standard UAP} & 37\\
\texttt{SGD} & 32\\
\texttt{Standard UAP\_RP} & 43\\
\texttt{Robust UAP} & 118\\
\bottomrule
\end{tabular}
\label{table:runtime}
\end{sc}
\end{small}
\end{center}
\end{table*}

\section{Effect of Compute Time on Robustness}\label{appendix:compute}

Previous sections highlight \texttt{SGD} as the most competitive algorithm to \texttt{RobustUAP} in terms of performance. However, in the previous section we note that \texttt{SGD} takes significantly less time to run. In this section, we investigate how \texttt{RobustUAP} performs with limited compute time as well as how SGD performs with increased runtime. We first add results to the ILSVRC 2012 part of Table \ref{table:rasr} by also computing \texttt{RobustUAP} performance when limited to the same amount of time that \texttt{SGD} takes. Table \ref{table:limited} shows that \texttt{RobustUAP} outperforms \texttt{SGD} even when its compute time is limited with up to 9\% more robustness on our most challenging transformation $R(10), T(2,2), Sh(2), Sc(2), B(2, 0.001)$.

\addtolength{\tabcolsep}{-1.5pt}   
\begin{table*}[ht]
\caption{Robust ASR of \texttt{RobustUAP} restricted to the same amount of compute time as SGD.}
\begin{center}
\begin{small}
\begin{sc}
\begin{tabular}{llcccc}
\toprule
\multirow{2}{*}{Dataset} & \multirow{2}{*}{Transformation Set} & \texttt{SGD} & \texttt{Robust} & \texttt{Restricted} \\
& & & \texttt{UAP} & \texttt{Robust UAP}\\
\midrule
                             &$R(20)$                                    & $69.9\%$ & $\mathbf{93.2}\%$ & $72.9\%$\\
ILSVRC                       &$T(2,2)$                                   & $96.1\%$ & $\mathbf{97.1}\%$ & $96.9\%$\\
2012                         &$Sc(5), R(5), B(5, 0.01)$                  & $85.4\%$ & $\mathbf{96.1}\%$ & $86.3\%$\\
                             &$R(10), T(2,2), Sh(2), Sc(2), B(2, 0.001)$ & $63.1\%$ & $\mathbf{86.4}\%$ & $72.0\%$\\
\bottomrule
\end{tabular}
\label{table:limited}
\end{sc}
\end{small}
\end{center}
\end{table*}
\addtolength{\tabcolsep}{1.5pt}

Next, we vary the number of SGD iterations. We compute the robust ASR on ILSVRC for robustness against $R(10), T(2,2), Sh(2), Sc(2), B(2, 0.001)$. Figure \ref{fig:sgd_performance}, shows the robust ASR achieved by SGD over time, here we observe that SGD's performance flatlines after a small number of iterations and seems to be unable to surpass about $65$. Here \texttt{SGD} is allowed to continue to run past where it would usually stop (at around 250 iterations), in this experiment we allow it to go to 1250 iterations which is about the same amount of time that \texttt{RobustUAP} takes to run. \texttt{RobustUAP} is able to achieve a performance of $72$ even when restricted to the amount of compute time of base \texttt{SGD} (It achieves $86.4$ when unrestricted). These two results in combination show that \texttt{RobustUAP} is able to find more robust UAPs than \texttt{SGD} whose performance stabilizes.

\begin{figure*}[ht]

     \centering\includegraphics[width=0.5\textwidth]{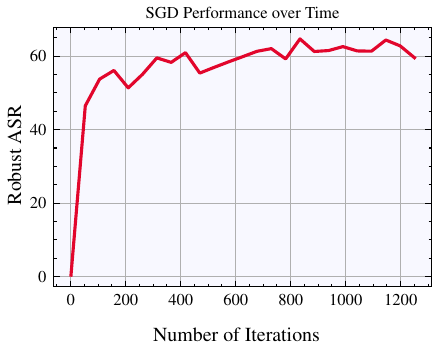}
     
    \caption{The Robust ASR with $\gamma = 0.6$ for \texttt{SGD} over time}
    \label{fig:sgd_performance}
\end{figure*}



\section{Targeted Attack}\label{appendix:targeted}

So far in this paper we have focused on untargeted attacks, i.e. attacks which aim to degrade the general performance of the model. Targeted attacks are also possible with both standard adversarial attack methods and universal adversarial perturbation methods. Here, we can simply turn our algorithm from untargeted to targeted by replacing the loss function. We would like to have target class, A, be classified as target class, B. Instead of maximizing the expected value of the cross entropy loss we can instead formulate the loss based on maximizing B while minimizing A similar to \citep{doubletargetuap}. For ILSVRC 2012, we randomly select a couple of target classes and perform this attack, for each of these cases, we train our robust UAP to be robust to $R(10), T(2,2), Sh(2), Sc(2), B(2, 0.001)$. Table \ref{table:targettarget} shows our results for robust ASR with $\gamma = 0.6$. We are measuring our robust ASR of turning class A into class B and observe similar results with \texttt{RobustUAP} being the most robust followed by \texttt{SGD}. It is also interesting to note that different random combinations lead to more or less success, i.e. it is easier to turn a dog into another dog than perfume into a padlock.

\addtolength{\tabcolsep}{-1.5pt}   
\begin{table*}[ht]
\caption{Robust ASR of \texttt{RobustUAP} for target to target attack compared to the three baselines with $\gamma = 0.6$.}
\begin{center}
\begin{small}
\begin{sc}
\begin{tabular}{llcccc}
\toprule
\multirow{2}{*}{Dataset} & \multirow{2}{*}{Target Class} & \texttt{Standard} & \texttt{SGD} & \texttt{Standard} & \texttt{Robust} \\
& & \texttt{UAP} &  & \texttt{UAP\_RP} & \texttt{UAP} \\
\midrule
\multirow{2}{*}{ILSVRC-2012} &toy poodle $\to$ maltese dog                   & $42.4\%$ & $99.1\%$ & $85.6\%$ & $\mathbf{99.8\%} $\\
                             &perfume $\to$ padlock                          & $0.0\%$ & $63.8\%$ & $5.1\%$ & $\mathbf{76.4\%} $\\
\bottomrule
\end{tabular}
\label{table:targettarget}
\end{sc}
\end{small}
\end{center}
\end{table*}
\addtolength{\tabcolsep}{1.5pt}

\section{\texttt{RobustUAP} vs \texttt{SGD} Performance.} \label{appendix:sgdvsrobust}

When comparing our baselines to \texttt{RobustUAP} we observe that the \texttt{SGD} algorithm has the most comparable performance to \texttt{RobustUAP}. In Appendix \ref{appendix:runtimes}, we report the runtimes for the different algorithms and observe that \texttt{SGD} runs four times as fast as \texttt{RobustUAP}. Although this seems to suggest that \texttt{SGD} is more efficient, we further investigate restricting the runtime of \texttt{RobustUAP} in Appendix \ref{appendix:compute}. We find that \texttt{RobustUAP} beats out \texttt{SGD} at the same runtime. Furthermore, we observe that \texttt{SGD}'s performance flatlines and does not reach the performance of \texttt{RobustUAP} even when allowed to run for longer. Finally, we note that since UAPs only need to be computed a single time and can be done in advance, the runtime considerations are not a big factor for most practical use cases.

\section{Data Efficiency}\label{appendix:dataefficiency}

In this section, we will evaluate the data efficiency of \texttt{RobustUAP}. We use \texttt{RobustUAP} to generate UAPs robust to $R(10), T(2,2), Sh(2), Sc(2), B(2, 0.001)$ on ILSVRC-2012 with differing amounts of training data. The results can be seen in Figure \ref{fig:dataefficiency}. These results show that the algorithm is able to achieve good performance at 500 data points but continues to improve up to 4000 data points. After that it seems to stagnate.

\begin{figure*}[ht]

     \centering\includegraphics[width=0.5\textwidth]{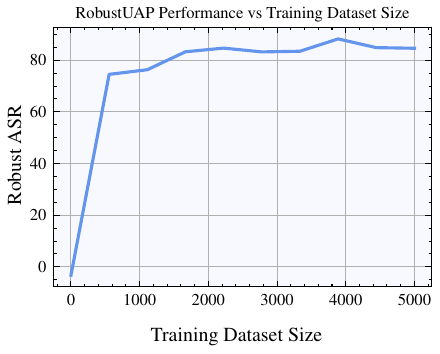}
     
    \caption{Robust ASR with $\gamma = 0.6$ for \texttt{RobustUAP} with differing amounts of training data}
    \label{fig:dataefficiency}
\end{figure*}

\section{Transformer-based Models}\label{appendix:transformers}

Recently, transformers have become popular as a new architecture for deep learning models for computer vision tasks. In this section, we evaluate the effectiveness of robust UAPs against one such model, ViT~\citep{vit}. \citet{vitadversarial} has shown that standard UAPs are still effective against transformer based architectures. In Table \ref{table:transformer} we can see that we get similar results compared to our results on Inception and MobileNet. This shows that our methods work against transformer based models as well.

\addtolength{\tabcolsep}{-4pt}   
\begin{table*}[ht]
\caption{Robust ASR of \texttt{RobustUAP}compared to the three baselines for ViT.}
\begin{center}
\begin{small}
\begin{sc}
\begin{tabular}{lllcccc}
\toprule
\multirow{2}{*}{Dataset} & \multirow{2}{*}{Model} & \multirow{2}{*}{Transformation Set} & \texttt{Standard} & \texttt{SGD} & \texttt{Standard} & \texttt{Robust} \\
& & & \texttt{UAP} &  & \texttt{UAP\_RP} & \texttt{UAP} \\
\midrule
\multirow{1}{*}{ILSVRC-2012}&ViT &$R(10), T(2,2), Sh(2), Sc(2), B(2, 0.001)$   & $2.0\%$ & $72.1\%$ & $12.9\%$ & $\mathbf{88.5\%} $\\
\bottomrule
\end{tabular}
\label{table:transformer}
\end{sc}
\end{small}
\end{center}
\end{table*}
\addtolength{\tabcolsep}{4pt}

\section{Robust UAPs against Robustly Trained  Networks}\label{appendix:robustnetwork}

In this section, we are interested in seeing whether training networks to be robust against the same transformations that the UAP is trying to be robust against is helpful. For this, we trained two new Inception-v3 networks. Because of time limitations, we started with our base Inception-v3 network and fine-tuned it using data augmentations. For the first network InceptionR20, we augmented the data by adding random rotations within 20 degrees. For the second network InceptionHF, we augmented the data by adding horizontal flips. We then crafted UAPs robust against rotations and flips on InceptionR20 and InceptionHF respectively. The results can be seen in Table \ref{table:robustnetworks}. We can compare the $R(20)$ results to those from our normal inception network. We postulate that since the network has received some additional robustness training it is harder to attack, and thus we should see slightly lower robustness scores. However, it seems that training the network to be robust to $R(20)$ does not significantly effect the ability to create robust UAPs. The horizontal flips seems like it might be too easy of a transformation as even standard UAP performs quite well for robust ASR.

\addtolength{\tabcolsep}{-4pt}   
\begin{table*}[ht]
\caption{Robust ASR of \texttt{RobustUAP} compared to the three baselines for robust networks.}
\begin{center}
\begin{small}
\begin{sc}
\begin{tabular}{lllcccc}
\toprule
\multirow{2}{*}{Dataset} & \multirow{2}{*}{Model} & \multirow{2}{*}{Transformation Set} & \texttt{Standard} & \texttt{SGD} & \texttt{Standard} & \texttt{Robust} \\
& & & \texttt{UAP} &  & \texttt{UAP\_RP} & \texttt{UAP} \\
\midrule
\multirow{2}{*}{ILSVRC-2012}&InceptionR20 &$R(20)$   & $6.3\%$ & $72.4\%$ & $10.2\%$ & $\mathbf{81.3\%} $\\
                            &InceptionHF  &$HF$      & $81.3\%$ & $99.5\%$ & $89.7\%$ & $\mathbf{99.6\%} $\\
\bottomrule
\end{tabular}
\label{table:robustnetworks}
\end{sc}
\end{small}
\end{center}
\end{table*}
\addtolength{\tabcolsep}{4pt}

\section{Ablation on optimization strategy}\label{appendix:ablationopt}

In this section, we study the effect of using different optimizers in addition to SGD. We use a variety of standard PyTorch optimizers, Adam, Adamax, Adagrad, and RMSProp. We formulate the optimization problem in the same way but instead use these algorithms in order to optimize our perturbation. We compute these results on ILSVRC-2012 with Inception-v3 and use $R(10), T(2,2), Sh(2), Sc(2), B(2, 0.001)$ as the transformation set and with $\gamma = 0.6$. The results can be seen in Table \ref{table:opt}. We see that the optimization strategy has some affect on the results and that SGD performs the best. We also found that SGD performed marginally faster than the rest of the approaches.

\addtolength{\tabcolsep}{-4pt}   
\begin{table*}[ht]
\caption{Comparison of different optimization strategies.}
\begin{center}
\begin{small}
\begin{sc}
\begin{tabular}{l|c}
\toprule
Optimizer & $ASR_R$\\
\midrule
SGD & $\mathbf{63.1}\%$\\
Adam & $59.7\%$\\
Adamax & $60.1\%$\\
Adagrad & $62.3\%$\\
RMSProp & $58.3\%$\\
\bottomrule
\end{tabular}
\label{table:opt}
\end{sc}
\end{small}
\end{center}
\end{table*}
\addtolength{\tabcolsep}{4pt}

\section{Visualization of Robust UAPs generated with Different Algorithms} \label{appendix:vis}

We further visualize UAPs generated with our three robust algorithms on the same transformation set against a standard UAP generated on ILSVRC 2012 in Figure \ref{fig:comparison}. We observe that \texttt{StandardUAP} UAPs resemble \texttt{StandardUAP\_RP} UAPs, but \texttt{StandardUAP\_RP} algorithm concentrates its budget towards the center as the center is least likely to be perturbed under our transformation set. Both the \texttt{RobustUAP} and the \texttt{SGD} algorithm generate larger patterns across the image. 

\begin{figure}[ht]
\begin{center}
\centerline{\includegraphics[width=0.9\linewidth]{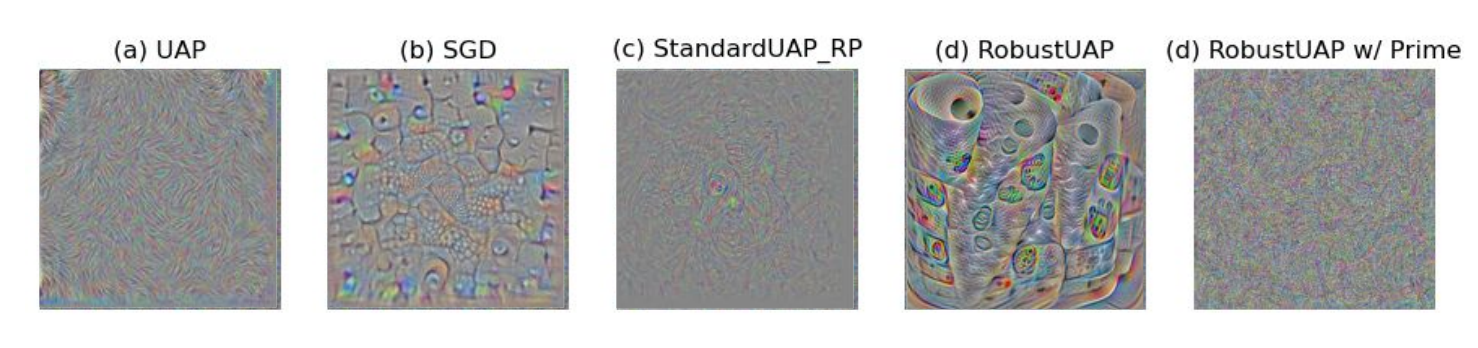}}
\caption{Comparison of UAPs on ILSVRC 2012 generated with (a) \texttt{StandardUAP}, (b) \texttt{RobustUAP}, (c) \texttt{Standard} \texttt{UAP\_RP}, (d) \texttt{RobustUAP}, and (e) \texttt{RobustUAP} generated on Prime.}
\label{fig:comparison}
\end{center}
\end{figure}
\section{Error Bars}\label{appendix:errorbars}

In this section, we will provide error bars/standard deviations for results reported in the paper. 

\subsection{Table \ref{table:rasr}}

First, we report the standard deviations for Table \ref{table:rasr} which we obtain by learning each UAP 10 times then evaluating them on each their respective dataset/transformation set combinations.

\addtolength{\tabcolsep}{-1.5pt}   
\begin{table*}[hbp]
\caption{Standard Deviation of Robust ASR values reported in Table \ref{table:rasr}}
\begin{center}
\begin{footnotesize}
\begin{sc}
\begin{tabular}{llcccc}
\toprule
\multirow{2}{*}{Dataset} & \multirow{2}{*}{Transformation Set} & \texttt{Standard} & \texttt{SGD} & \texttt{Standard} & \texttt{Robust} \\
& & \texttt{UAP} &  & \texttt{UAP\_RP} & \texttt{UAP} \\
\midrule
                             &$R(20)$                                    & $1.1\%$ & $7.2\%$ & $10.2\%$ & $1.5\%$\\
ILSVRC                       &$T(2,2)$                                   & $11.4\%$ & $1.8\%$ & $6.3\%$ & $2.3\%$\\
2012                         &$Sc(5), R(5), B(5, 0.01)$                  & $10.1\%$ & $5.2\%$ & $7.4\%$ & $3.1\%$\\
                             &$R(10), T(2,2), Sh(2), Sc(2), B(2, 0.001)$ & $0.0\%$ & $8.6\%$ & $4.9\%$ & $2.8\%$\\
\midrule
\multirow{3}{*}{CIFAR-10}    &$R(30), B(2, 0.001)$                       & $0.2\%$ & $7.5\%$ & $9.0\%$ & $5.2\%$\\
                             &$R(2), Sh(2)$                              & $11.3\%$ & $5.5\%$ & $8.2\%$ & $0.9\%$\\
                             &$R(10), T(2,2), Sh(2), Sc(2), B(2, 0.001)$ & $1.8\%$ & $6.8\%$ & $5.1\%$ & $3.7\%$\\
\bottomrule
\end{tabular}
\label{table:rasr_std}
\end{sc}
\end{footnotesize}
\end{center}
\end{table*}
\addtolength{\tabcolsep}{1.5pt}

\subsection{Table \ref{table:urasr}}

First, we report the standard deviations for Table \ref{table:urasr} which we obtain by learning each UAP 10 times then evaluating them on each their respective dataset/transformation set combinations.

\addtolength{\tabcolsep}{-4.5pt}   
\begin{table*}[t]
\caption{Robust ASR (\%) of \texttt{RobustUAP} trained on PRIME, Affine ($R(10)$, $T(2,2)$, $Sh(2)$, $Sc(2)$, $B(2, 0.001)$), and Fog when applied to Prime, Affine, and common corruption transforms}
\begin{center}
\begin{scriptsize}
\begin{sc}
\begin{tabular}{c|cc|ccc|cccc|cccc|cccc}
\toprule
 & \multicolumn{17}{c}{Evaluation Corruption Set}\\
\cmidrule{2-18}
Train & \multicolumn{2}{c}{} & \multicolumn{3}{c}{Noise} & \multicolumn{4}{c}{Blur} & \multicolumn{4}{c}{Weather} & \multicolumn{4}{c}{Digital} \\
Set & Prime & Aff. & Gaus. & Shot & Imp. & Defo. & Glass & Moti. & Zoom & Snow & Fog & Frost & Bright & Contr. & Elast. & Pixel & JPEG \\
\midrule
PRIME   & 3.7  & 4.6    & 6.1  & 3.2  & 4.6    & 1.3  & 3.8  & 4.9  & 6.7    & 2.4  & 7.5  & 4.3  & 1.4    & 5.2  & 3.1  & 4.6  & 2.7  \\
Affine  & 9.7  & 5.9    & 3.0  & 2.7  & 5.2    & 8.7  & 7.6  & 5.2  & 0.7    & 11.2 & 6.9  & 6.6  & 5.2    & 0.6  & 3.8  & 9.6  & 6.8  \\
Fog     & 3.1  & 5.2    & 4.7  & 6.6  & 4.6    & 1.9  & 8.4  & 3.5  & 5.1    & 17.8 & 1.6  & 12.1 & 4.0    & 3.6  & 7.3  & 1.1  & 12.6 \\
\bottomrule
\end{tabular}
\label{table:urasr_std}
\end{sc}
\end{scriptsize}
\end{center}
\end{table*}
\addtolength{\tabcolsep}{4.5pt}

\subsection{Remaining Values}

We find that the standard deviations are pretty similar across both tables reported and in some testing of the remaining results. For time reasons we have left the remaining standard deviations out as we don't find them informative. We are happy to provide these numbers for any results in the main body or appendix of the paper.

\end{document}